\begin{document}

\twocolumn[

\aistatstitle{Reparameterizing Distributions on Lie Groups}

\aistatsauthor{Luca Falorsi\textsuperscript{1,2} \And Pim de Haan\textsuperscript{1,3} \And  Tim R. Davidson\textsuperscript{1,2} \And Patrick Forr\'e\textsuperscript{1}}
\aistatsaddress{\textsuperscript{1}University of Amsterdam \;\;\; \textsuperscript{2}Aiconic \;\;\; \textsuperscript{3}UC Berkeley}
]

\begin{abstract}
    Reparameterizable densities are an important way to learn probability distributions in a deep learning setting. 
    For many distributions it is possible to create low-variance gradient estimators by utilizing a `reparameterization trick'. Due to the absence of a general reparameterization trick, much research has recently been devoted to extend the number of reparameterizable distributional families. Unfortunately, this research has primarily focused on distributions defined in Euclidean space, ruling out the usage of one of the most influential class of spaces with non-trivial topologies: \emph{Lie groups}. In this work we define a general framework to create reparameterizable densities on arbitrary Lie groups, and provide a detailed practitioners guide to further the ease of usage.
    We demonstrate how to create complex and multimodal distributions on the well known oriented group of 3D rotations, $\SO{3}$, using normalizing flows. Our experiments on applying such distributions in a Bayesian setting for pose estimation on objects with discrete and continuous symmetries, showcase their necessity in achieving realistic uncertainty estimates.
\end{abstract}

\section{INTRODUCTION} \label{sec:intro}

Formulating observed data points as the outcomes of probabilistic processes, has proven to provide a useful framework to design successful machine learning models. Thus far, the research community has drawn almost exclusively from results in probability theory limited to Euclidean and discrete space. Yet, expanding the set of possible spaces under consideration to those with a non-trivial topology has had a longstanding tradition and giant impact on such fields as physics, mathematics, and various engineering disciplines. One significant class describing numerous spaces of fundamental interest is that of \emph{Lie} groups, which are groups of symmetry transformations that are simultaneously differentiable manifolds. Lie groups include rotations, translations, scaling, and other geometric transformations, which play an important role in several application domains. Lie group elements are for example utilized to describe the rigid body rotations and movements central in robotics, and form a key ingredient in the formulation of the Standard Model of particle physics. They also provide the building blocks underlying ideas in a plethora of mathematical branches such as \emph{holonomy} in Riemannian geometry, \emph{root systems} in Combinatorics, and the \emph{Langlands Program} connecting geometry and number theory. 
    
Many of the most notable recent results in machine learning can be attributed to researchers' ability to combine probability theoretical concepts with the power of deep learning architectures, e.g. by devising optimization strategies able to directly optimize the parameters of probability distributions from samples through backpropagation. Perhaps the most successful instantiation of this combination, has come in the framework of \emph{Variational Inference} (VI) \citep{ jordan1999introduction}, a Bayesian method used to approximate intractable probability densities through optimization. Crucial for VI is the ability to posit a flexible family of densities, and a way to find the member closest to the true posterior by optimizing the parameters. 
    
These variational parameters are typically optimized using the \emph{evidence lower bound} (ELBO), a lower bound on the data likelihood. The two main approaches to obtaining estimates of the gradients of the ELBO are the \emph{score function} \citep{ paisley2012variational, mnih2014neural} also known as REINFORCE \citep{williams1992simple}, and the \emph{reparameterization trick} \citep{price1958useful, bonnet1964transformations, salimans2013fixed, kingma-vae13, pmlr-v32-rezende14}. While various works have shown the latter to provide lower variance estimates, its use is limited by the absence of a general formulation for all variational families. Although in recent years much work has been done in extending this class of reparameterizable families \citep{generalized_reparam_ruiz, accept_reject_reparam, implicit_reparam_figurnov}, none of these methods explicitly investigate the case of distributions defined on non-trivial manifolds such as Lie groups.
    
The principal contribution of this paper is therefore to extend the reparameterization trick to Lie groups. We achieve this by providing a general framework to define reparameterizable densities on Lie groups, under which the well-known Gaussian case of \citet{kingma-vae13} is recovered as a special instantiation. This is done by pushing samples from the Lie algebra into the Lie group using the exponential map, and by observing that the corresponding density change can be analytically computed. We formally describe our approach using results from differential geometry and measure theory.
    
In the remainder of this work we first cover some preliminary concepts on Lie groups and the reparameterization trick. We then proceed to present the general idea underlying our reparameterization trick for Lie groups (ReLie\footnote{Pronounced `really'.}), followed by a formal proof. Additionally, we provide an implementation section\footnote{Code available at \url{https://github.com/pimdh/relie}} where we study three important examples of Lie groups, deriving the reparameterization details for the $n$-Torus, $\mathbb{T}^N$, the oriented group of 3D rotations, $\SO3$, and the group of 3D rotations and translations, $\SE3$. We conclude by creating complex and multimodal reparameterizable densities on $\SO{3}$ using a novel non-invertible normalizing flow, demonstrating applications of our work in both a supervised and unsupervised setting.
    
\section{PRELIMINARIES} \label{sec:background}
In this section we first cover a number of preliminary concepts that will be used in the rest of this paper.

\subsection{Lie Groups and Lie Algebras}

\paragraph{Lie Group, $G$:}
A \emph{Lie group}, $G$ is a group that is also a smooth manifold.
This means that we can, at least in local regions, describe group elements continuously with parameters. The number of parameters equals the dimension of the group. We can see (connected) Lie groups as continuous symmetries where we can continuously traverse between group elements\footnote{We refer the interested reader to \citep{hall2003lie}.}. Many relevant Lie groups are matrix Lie groups, which can be expressed as a subgroup of the Lie group $\GL{n,\R}$ of invertible square matrices with matrix multiplication as product.

\paragraph{Lie Algebra, $\alg$:}
The \emph{Lie algebra} $\alg$ of a $N$ dimensional Lie group is its tangent space at the identity, which is a vector space of $N$ dimensions. We can see the algebra elements as infinitesimal generators, from which all other elements in the group can be created. For matrix Lie groups we can represent vectors $\bm{v}$ in the tangent space as matrices $\mathbf{v}_\times$.

\paragraph{Exponential Map, $\exp(\cdot)$:}
The structure of the algebra creates a map from an element of the algebra to a vector field on the group manifold.
This gives rise to the \emph{exponential map}, $\exp : \alg \rightarrow G$ which maps an algebra element to the group element at unit length from the identity along the flow of the vector field.
The zero vector is thus mapped to the identity.
For compact connected Lie groups, such as $\SO3$, the exponential map is surjective.
Often, the map is not injective, so the inverse, the $\log$ map, is multi-valued. 
The exponential map of matrix Lie groups is the matrix exponential.

\paragraph{Adjoint Representation, $\ad_\x$:}
The Lie algebra is equipped with with a bracket $[\cdot,\cdot]: \alg \times \alg \to \alg$, which is bilinear. The bracket relates the structure of the group to structure on the algebra. For example, $\log (\exp(x)\exp(y))$ can be expressed in terms of the bracket. The bracket of matrix Lie groups is the commutator of the algebra matrices. The adjoint representation of $x \in \alg$ is the matrix representation of the linear map $\ad_\x :  \alg \to \alg : y \mapsto [x,y]$.

\begin{figure*}[t]
    \centering
    \includegraphics[width=0.7\textwidth]{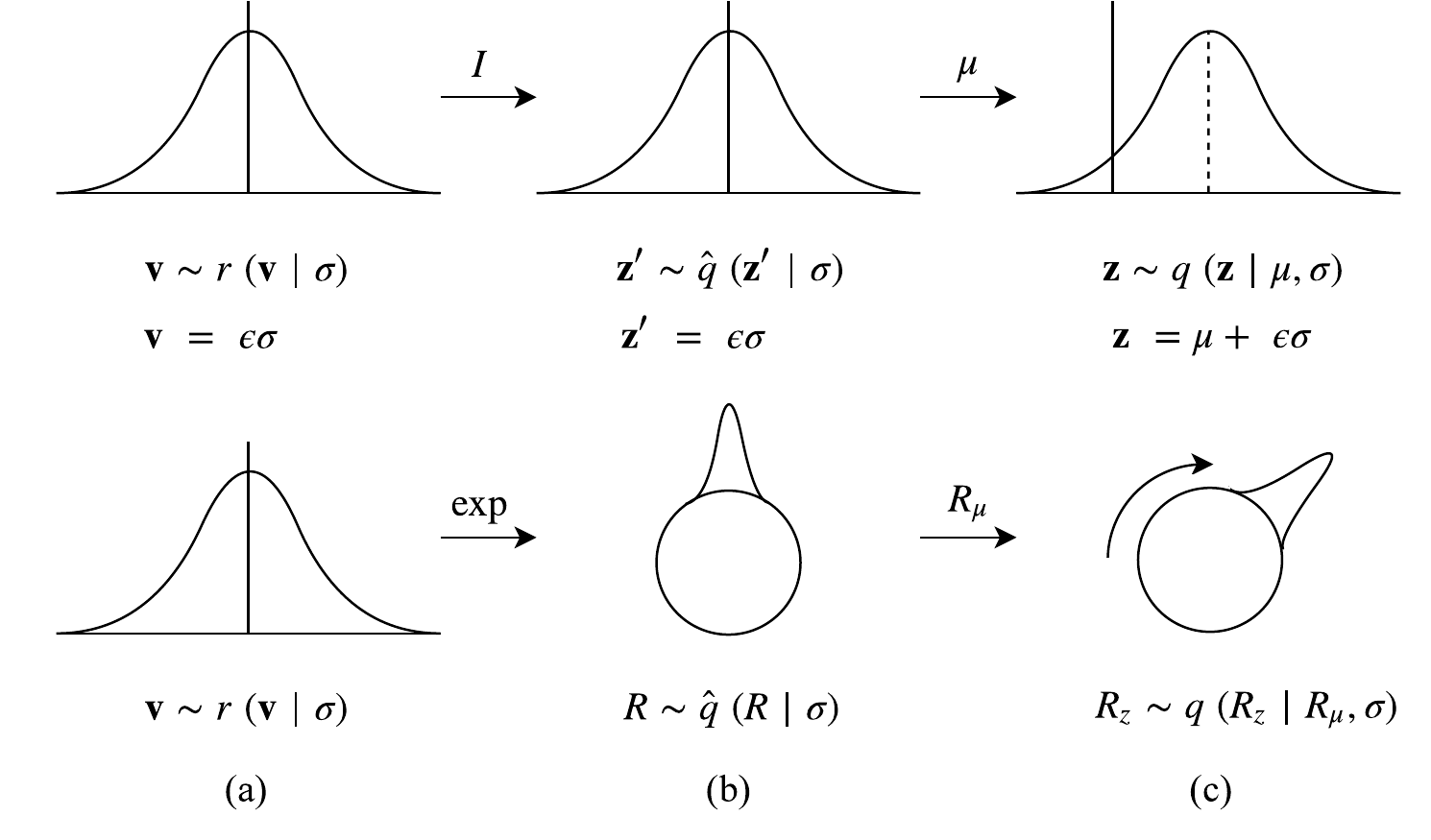}
    \caption{Illustration of reparameterization trick of a Lie group v. the classic reparameterization trick.}
    \label{fig:reparam-pipeline}
\end{figure*}

\subsection{Reparameterization Trick}

The reparameterization trick \citep{price1958useful, bonnet1964transformations, salimans2013fixed, kingma-vae13, pmlr-v32-rezende14} is a technique to simulate samples $ z \sim q( z,\theta)$ as 
$z = \mathcal{T}(\epsilon;\theta)$, where $\epsilon \sim s(\epsilon)$ is independent from $\theta$
\footnote{At most weakly dependent \citep{generalized_reparam_ruiz}.},
and the transformation $\mathcal{T}(\epsilon;\theta)$ should be differentiable w.r.t. $\theta$. It has been shown that this generally results in lower variance estimates than score function variants, thus leading to more efficient and better convergence results \citep{titsias2014doubly, fan2015fast}. This reparameterization of samples $z$, allows expectations w.r.t. $q(z;\theta)$ to be rewritten as $\E_{q(z;\theta)}[f(z)] =\E_{s(\epsilon)}[f(\mathcal{T}(\epsilon;\theta))]$, thus making it possible to directly optimize the parameters of a probability distribution through backpropagation.  

Unfortunately, there exists no general approach to defining a reparameterization scheme for arbitrary distributions. Although there has been a significant amount of research into finding ways to extend or generalize the reparameterization trick \citep{generalized_reparam_ruiz, accept_reject_reparam, implicit_reparam_figurnov},
to the best of our knowledge no such trick exists for spaces with non-trivial topologies such as Lie groups. 

\section{REPARAMETERIZING DISTRIBUTIONS ON LIE GROUPS} \label{sec:reparam-lie}
  
In this section we will first explain our reparameterization trick for distributions on Lie groups (ReLie), by analogy to the classic Gaussian example described in \citep{kingma-vae13}, as we can consider $\mathbb{R}^N$ under addition as a Lie group with Lie algebra $\mathbb{R}^N$ itself. In the remainder we build an intuition for our general theory drawing both from geometrical as well as measure theoretical concepts, concluded by stating our formal theorem.

\subsection{Reparameterization Steps}
The following reparameterization steps (a), (b), (c) are illustrated in Figure \ref{fig:reparam-pipeline}.

(a) We first sample from a reparameterizable distribution $r(\vv | \sigma)$ on $\alg$. Since the Lie algebra is a real vector space, if we fix a basis this is equivalent to sampling a reparameterizable distribution from $\mathbb{R}^N$. In fact, the basis induces an isomorphism between the Lie algebra and $\mathbb{R}^N$ (see Appendix \ref{app:choice-scalar-algebra}).

(b) Next we apply the exponential map to $\vv$, to obtain an element, $g \sim \hat{q}(g | \sigma)$ of the group. If the distribution $r(\vv | \sigma)$ is concentrated around the origin, then the distribution of $\hat{q}(g | \sigma)$ will be concentrated around the group identity. In the Gaussian example on $\mathbb{R}^N$, this step corresponds to the identity operation, and $r = \hat{q}$. As this transformation is in general not the identity operation, we have to account for the possible change in volume using the change of variable formula\footnote{In a sense, this is similar to the idea underlying \emph{normalizing flows} \citep{normalizing-flows}}. Additionally the exponential map is not necessarily injective, such that multiple points in the algebra can map to the same element in the group. We will have a more in depth discussion of both complications in the following subsection.

(c) Finally, to change the location of the distribution $\hat{q}$, we left multiply $g$ by another group element $g_\mu$, applying the group specific operation. In the classic case this corresponds to a translation by $\mu$. If the exponential map is surjective (like in all compact and connected Lie groups), then $g_\mu$ can also be parameterized by the exponential map\footnote{Care must be taken however when $g_\mu$ is predicted by a neural network to avoid homeomorphism conflicts as explored in \citep{falorsi2018explorations, dehaan18homeo-auto}}.

\begin{figure*}[!ht]
    \centering
    \subfigure[$f_*(m')$ no density]{
    \includegraphics[width=0.35\textwidth]{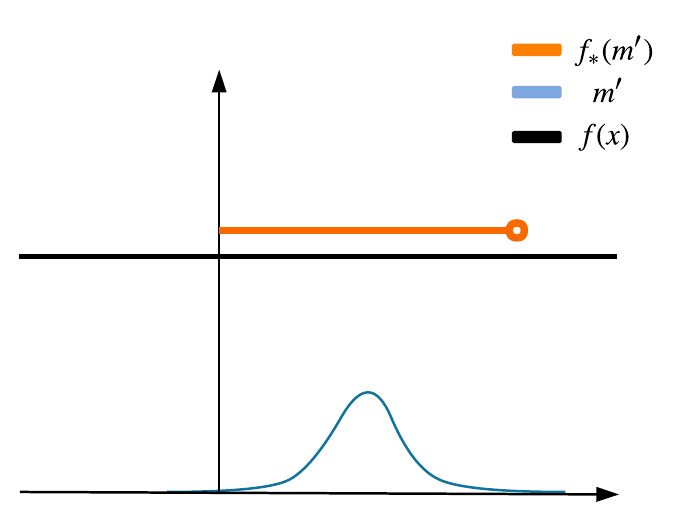}
    \label{fig:non-inject-no-density}
    } 
    \hspace{6em}
    \subfigure[$g_*(m')$ with density]{
    \includegraphics[width=0.35\textwidth]{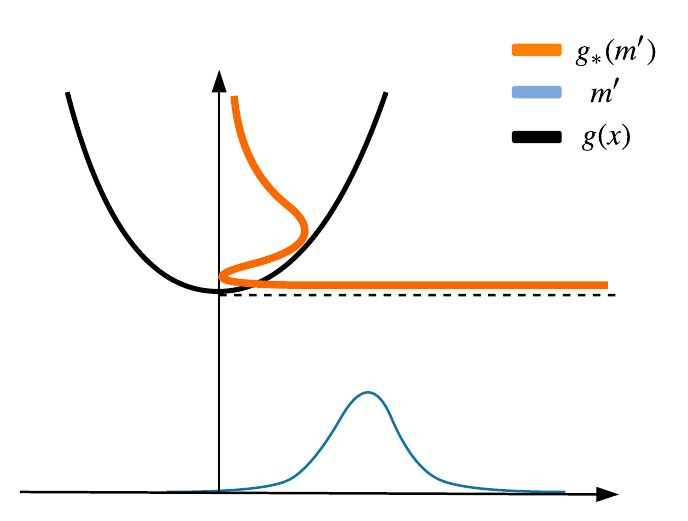}
    \label{fig:non-inject-w-density}
    }
    \caption{Example of the two non-injective mappings, $f(x) = 1$ and $g(x) = x^2 + c$, where the blue line denotes the initial Gaussian density of $\measurealg'$, and the orange line the transformed density. In (a) the pushforward measure of $\measurealg'$ by $f$ collapses to $\delta_1$, while for (b) the pushforward measure by $g$ has a density.} 
    \label{fig:non-injective-example}
\end{figure*}

\subsection{Theory}

\paragraph{Geometrical Concepts} 
When trying to visualize the change in volume, moving from the Lie algebra space to that of the group manifold, we quickly reach the limits of our geometrical intuition. As concepts like volume and distance are no longer intuitively defined, naturally our treatment of integrals and probability densities should be reinspected as well. In mathematics these concepts are formally treated in the fields of differential and Riemannian geometry. To gain insight into building quantitative models of the above-mentioned concepts, these fields start from the local space behavior instead. This is done through the notion of the Riemannian metric, which formally corresponds to "attaching" to the tangent space $T_p G$ at every point $p$ a scalar product $\langle\ ,\ \rangle_p $. This allows to measure quantities like length and angles, and to define a local volume element, in small infinitesimal scales. Extrapolating from this approach we are now equipped to measure sets and integrate functions, which corresponds to having a measure on the space\footnote{We refer the interested reader to \citep{LeeSmooth}.}.
Notice that this measure arose directly from the geometric properties defined by the Riemannian metric. By carefully choosing this metric, we can endow our space with some desirable properties. A standard choice for Lie groups is to use a left invariant metric, which automatically induces a left invariant measure $\nu$, called the \emph{Haar measure} (unique up to a constant):
\begin{align}
    \nu(gE) = \nu(E), \;\;\; \forall g \in G, E \in \mathcal{B}[G], \nonumber
\end{align}
where $gE$ is the set obtained by applying the group element to each element in the set $E$. More intuitively, this implies that left multiplication doesn't change volume.

\paragraph{Measure Theoretical Concepts} Perhaps a more natural way to view this problem comes from measure theory, as we're trying to push a measure on $\alg$, to a space $G$ with a possibly different topology. Whenever discussing densities such as $r$ in \Rn, it is implicitly stated that we consider a density w.r.t. the Lebesgue measure $\lebesgue$. What this really means is that we are considering a measure \ms, absolutely continuous (a.c.) w.r.t. $\lebesgue$, written as $\measurealg \ll \lebesgue$\footnote{See definition \ref{app:def-ac}, Appendix \ref{app:prereq}}. Critically, this is equivalent to stating there exists a density $r$, such that
\begin{equation} \label{eq:ms-def}
    \measurealg(E) = \int_E r\; \text{d} \lebesgue, \quad \forall E \in \mathcal{B}(\mathbb{R}^N), \nonumber
\end{equation}
where $\mathcal{B}(\mathbb{R}^N)$ is the Borel $\sigma$-algebra, i.e. the collection of all measurable sets. When applying the exponential map, we define a new measure on $G$\footnote{We can do this since the exponential map is differentiable, thus continuous, thus measurable.}, technically called the \emph{pushforward} measure, $\exp_*(\measurealg)$\footnote{See definition \ref{app:def-pushforward}, Appendix \ref{app:prereq}}. However, $G$ already comes equipped with another measure $\nu$, not necessarily equal to $\exp_*(\measurealg)$. Hence, if we consider a prior distribution that has a density on $\nu$, in order to compute quantities such as the Kullback-Leibler divergence we also need $\exp_*(\measurealg) \ll \nu$, meaning it has a density $\hat{q}$ w.r.t. $\nu$.

In the case the exponential map is injective, it can easily be shown that the pushforward measure has a density on $\nu$\footnote{In fact, as discussed before we can always reduce to this case by defining a measure with limited support.}. However, that these requirements are not \emph{necessarily} fulfilled can be best explained through a simple example: Consider $f: \mathbb{R} \to \mathbb{R}$, s.t. $f(x) = 1, \forall x \in \mathbb{R}$, this function is clearly differentiable (see Fig. \ref{fig:non-inject-no-density}). 
If we take a measure $\measurealg'$, with a Gaussian density, the pushforward of $\measurealg'$ by $f$ is a Dirac delta, $\delta_1$, for which it no longer holds that $f_*(\measurealg') \ll \lebesgue$. Intuitively, this happens because $f$ is not injective since all points $x \in \mathbb{R}$ are mapped to $1$, such that all the mass of the pushforward measure is concentrated on a single point.

Yet, this does not mean that all non-injective mappings can not be used. Instead, consider $g: \mathbb{R} \to \mathbb{R}$, s.t. $g(x) = x^2 + c, \forall x \in \mathbb{R}$ with $c \in \mathbb{R}$ a constant, and $\measurealg'$ as before (see Fig. \ref{fig:non-inject-w-density}). Although $g$ is clearly not injective, for the pushforward measure by $g$ we still have $g_*(\measurealg') \ll \lebesgue$. The key property here, is that it's possible to partition the domain of $g$ into the sets $(-\infty, 0) \cup (0, \infty) \cup \{0\}$. For the first two, we can now apply the change of variable method on each, as $g$ is injective when restricted to either. The zero set can be ignored, since it has Lebesgue measure 0. This partition-idea can be generally extended for Lie groups, by proving that the Lie algebra domain can be partitioned in a set of measure zero and a countable union of open sets in which the exponential map is a 
diffeomorphism. This insight proven in Lemma \ref{lemma:local-diffeomorphism}, allows us to prove the general theorem:

\begin{theorem} \label{thm:main-text}
Let $G$, $\alg$, $\measurealg$, $\lebesgue$, $\nu$ be defined as above, then $\exp_*(\measurealg)\ll \nu$ with density:
\begin{equation}
    p(a) = \sum_{{\x\in \alg : \exp(\x) = a}} r(\x) |J(\x)|^{-1}, \quad a\in G,
\end{equation}
where $J(\x):= \det \left(\sum_{k=0}^\infty \dfrac{(-1)^k}{(k+1)!} (\ad_\x)^k \right)$ 
\end{theorem}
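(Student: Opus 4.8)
The plan is to reduce the statement to the ordinary change-of-variables formula on $\mathbb{R}^N$, using the partition of $\alg$ furnished by Lemma~\ref{lemma:local-diffeomorphism} together with the classical formula for the differential of the exponential map. As a first step I would pin down the normalization of the Haar measure $\nu$: under the identification $\alg \cong T_e G$ and of $\lebesgue$ with the volume element that the chosen basis/inner product puts on $T_e G$, take $\nu$ to be the left-invariant measure agreeing with $\lebesgue$ at the identity. Since $J(\x) = \det A(\x)$ with $A(\x) := \sum_{k\ge 0}\tfrac{(-1)^k}{(k+1)!}(\ad_\x)^k$ an endomorphism of $\alg$, the factor $J(\x)$ is basis-independent, so this normalization is harmless (cf.\ Appendix~\ref{app:choice-scalar-algebra}).

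The geometric heart of the argument is the derivative-of-the-exponential formula: after left-trivialization, $(d\exp)_\x = (dL_{\exp(\x)})_e \circ A(\x)$. Because $\nu$ is left-invariant, every $dL_g$ preserves volume, so pulling back the left-invariant top-form $\omega$ whose value at $e$ induces $\lebesgue$ gives $\exp^*\omega = J(\x)\,d\lebesgue$ (identifying top-forms with their densities). Hence, on any open $U \subseteq \alg$ on which $\exp$ restricts to a diffeomorphism onto $\exp(U)$, the pushforward $\exp_*(r\,\lebesgue|_U)$ has density $a \mapsto r(\phi(a))\,|J(\phi(a))|^{-1}$ with respect to $\nu|_{\exp(U)}$, where $\phi := (\exp|_U)^{-1}$.

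Now I would invoke Lemma~\ref{lemma:local-diffeomorphism} to write $\alg = Z \sqcup \bigsqcup_i U_i$ with $\lebesgue(Z) = 0$, the $U_i$ open and pairwise disjoint, and each $\exp|_{U_i}$ a diffeomorphism onto its image; write $\phi_i := (\exp|_{U_i})^{-1}$. Since $\exp$ is smooth, hence locally Lipschitz, $\exp(Z)$ is $\nu$-null, so for $\nu$-a.e.\ $a$ every solution of $\exp(\x) = a$ lies in some $U_i$. For a Borel $E \subseteq G$, decompose $\measurealg(\exp^{-1}(E)) = \sum_i \measurealg(\exp^{-1}(E)\cap U_i)$, apply the previous step termwise, and use monotone convergence to swap sum and integral:
\[
  \exp_*(\measurealg)(E) \;=\; \int_E \sum_{i\,:\,a\in\exp(U_i)} r(\phi_i(a))\,|J(\phi_i(a))|^{-1}\,d\nu(a).
\]
For $\nu$-a.e.\ $a$ the index set $\{i : a \in \exp(U_i)\}$ is in bijection with the preimage set $\{\x \in \alg : \exp(\x) = a\}$ (the missing preimages live in the $\nu$-null set $\exp(Z)$), so the integrand equals $p(a)$, yielding $\exp_*(\measurealg) \ll \nu$ with density $p$.

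I expect the main obstacle to be the second step: combining the left-invariance of $\nu$ with the derivative-of-$\exp$ formula to land exactly on $|J(\x)|^{-1}$, which requires carefully tracking the $\alg \cong T_e G$ identification, the choice of reference volume, and verifying that the $dL_g$ factor contributes Jacobian $1$ (precisely the defining property of the Haar measure). The remaining ingredients — measurability of the $\phi_i$, the monotone-convergence swap over the countable family $\{U_i\}$, and the $\nu$-nullity of $\exp(Z)$ — are routine.
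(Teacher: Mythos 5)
Your proposal is correct and follows essentially the same route as the paper: invoke Lemma~\ref{lemma:local-diffeomorphism} to split $\alg$ into a Lebesgue-null set and countably many open pieces on which $\exp$ is a diffeomorphism, apply the change-of-variables formula on each piece with the Jacobian $J(\x)=\det\bigl(\sum_{k\ge 0}\tfrac{(-1)^k}{(k+1)!}(\ad_\x)^k\bigr)$ coming from the classical derivative-of-$\exp$ formula combined with left-invariance of $\nu$, and sum over the pieces, discarding the $\nu$-null image of the exceptional set. The only point worth stating explicitly in a write-up is that the preimage sum in the density is therefore only identified for $\nu$-a.e.\ $a$ (on the singular set the fiber can even be uncountable), which you do handle correctly.
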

\begin{proof}
\text{See Appendix \ref{thm:main}}
\end{proof}

\paragraph{Location Transformation}
Having verified the pushforward measure has a density, $\hat{q}$, the final step is to recenter the location of the resulting distribution. In practice, this is done by left multiplying the samples by another group element. Technically, this corresponds to applying the left multiplication map
\begin{align}
    L_{g_\mu}: G \to G, \;\; g \mapsto g_\mu g \nonumber
\end{align}
Since this map is a diffeomorphism, we can again apply the change of variable method. Moreover, if the measure on the group $\nu$ is chosen to be the Haar measure, as noted before applying the left multiplication map leaves the Haar measure unchanged. In this case the final sample thus has density
\begin{align}
    g_z \sim q(g_z | g_\mu, \sigma ) = \hat q(g_\mu^{-1}g_z|\sigma ) \nonumber
\end{align}
Additionally, the entropy of the distribution is invariant w.r.t. left multiplication.

\section{IMPLEMENTATION}\label{sec:prac-guide}
In this section we present general implementation details, as well as worked out examples for three interesting and often used groups: the n-Torus, $\mathbb{T}^N$, the oriented group of 3D rotations, $\SO{3}$, and the 3D rotation-translation group, $\SE{3}$. The worst case reparameterization computational complexity for a matrix Lie group of dimension $n$ can be shown\footnote{See Appendix \ref{app:computing-jacobian}.} to be $O(n^3)$. However for many Lie groups closed form expressions can be derived, drastically reducing the complexity.

\subsection{Computing \texorpdfstring{$J(\x)$}{J(x)}}
The term $J(\x)$ as appearing in the general reparameterization theorem \ref{thm:main-text}, is crucial to compute the change of volume when pushing a density from the algebra to the group. Here, we given an intuitive explanation for Matrix Lie groups in $d$ dimensions with matrices of size $n \times n$. For a formal general explanation, we refer to Appendix \ref{app:change-of-varables}.

The image of the $\exp$ map is the $d$ dimensional manifold, Lie group $G$, embedded in $\mathbb{R}^{n \times n}$. An infinitesimal variation in the input around point $\x \in \alg$, creates an infinitesimal variation of the output, which is restricted to the $d$ dimensional manifold $G$. Infinitesimally this gives rise to a linear map between the tangent spaces at input and output. This is the Jacobian.

The change of volume is the determinant of the Jacobian. To compute it, we express the tangent space at the output in terms of the chosen basis of the Lie algebra. This is possible, since a basis for a Lie algebra provides a unique basis for the tangent space throughout $G$. This can be computed analytically for any $\x$, since the $\exp$ map of matrix Lie groups is the matrix exponential, for which derivatives are computable. Nevertheless a general expression of $J(\x)$ exists for any Lie Group and is given in terms of the complex eigenvalue spectrum $Sp(\cdot)$ of the adjoint representation of $\x$, which is a linear map:

\begin{theorem}
Let $G$ be a Lie Group and $\alg$ its Lie algebra, then it can be shown that $J(\x)$ can be computed using the following expression
\begin{equation}
    J(\x):=
    \prod_{\substack{\lambda \in \text Sp(\ad_\x) \\\lambda \neq 0 }} \frac{\lambda}{1 - e^{-\lambda}}
\end{equation} 
\end{theorem}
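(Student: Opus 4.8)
The plan is to reduce the matrix determinant in Theorem~\ref{thm:main-text} to a scalar product over the spectrum, and then to match the stated orientation of the fraction with care. First I would identify the power series in closed form: comparing term by term with the expansion of $1-e^{-z}$ shows that
\[
\phi(z) := \sum_{k=0}^\infty \frac{(-1)^k}{(k+1)!}\, z^k = \frac{1-e^{-z}}{z},
\]
which is entire (the singularity at $z=0$ is removable, with $\phi(0)=1$). Hence the matrix $M := \sum_{k=0}^\infty \frac{(-1)^k}{(k+1)!}(\ad_\x)^k$ is exactly $\phi(\ad_\x)$, and it is well defined for every $\x\in\alg$ since the series converges on all of $\mathbb{C}$.

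The engine of the proof is the behaviour of the determinant under the holomorphic functional calculus. I would put $\ad_\x$ into upper-triangular (Schur) form over $\mathbb{C}$, $\ad_\x = U T U^{-1}$ with $T$ upper-triangular carrying the eigenvalues $\lambda\in\mathrm{Sp}(\ad_\x)$ on its diagonal. Since $\phi$ is given by a convergent power series, $\phi(\ad_\x)=U\,\phi(T)\,U^{-1}$ with $\phi(T)$ again upper-triangular and diagonal entries $\phi(\lambda)$, so the off-diagonal entries never enter the determinant. Therefore $\det\phi(\ad_\x)=\prod_{\lambda}\phi(\lambda)$, taken with algebraic multiplicity. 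Because $\phi(0)=1$, every zero eigenvalue contributes a trivial factor — and $0$ is always an eigenvalue, as $\ad_\x\x=[\x,\x]=0$ — so the product collapses to $\prod_{\lambda\neq 0}\frac{1-e^{-\lambda}}{\lambda}$ over the nonzero spectrum. For non-diagonalizable $\ad_\x$ the Schur form already suffices; alternatively one argues by continuity from the dense set of diagonalizable matrices.

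The last, and genuinely delicate, step is matching the orientation of the fraction in the stated expression. The computation above gives $\det\phi(\ad_\x)=\prod_{\lambda\neq 0}\frac{1-e^{-\lambda}}{\lambda}$, whereas the target is the reciprocal $\prod_{\lambda\neq 0}\frac{\lambda}{1-e^{-\lambda}}$. This is not a cosmetic normalization: the stated product is precisely $\det\left(M^{-1}\right)=\det\left(\frac{\ad_\x}{1-e^{-\ad_\x}}\right)$, the determinant of the inverse of the Theorem~\ref{thm:main-text} matrix, i.e. the Jacobian of the local inverse $\exp^{-1}$ rather than of $\exp$. I would therefore apply the same triangularization directly to $\psi(\ad_\x):=\frac{\ad_\x}{1-e^{-\ad_\x}}=M^{-1}$, whose diagonal entries in Schur form are $\psi(\lambda)=\frac{\lambda}{1-e^{-\lambda}}$ (with $\psi(0)=1$, so zero eigenvalues again drop out), yielding exactly $\prod_{\lambda\neq 0}\frac{\lambda}{1-e^{-\lambda}}$. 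This quantity is what multiplies $r(\x)$ in the density: since Theorem~\ref{thm:main-text} writes the correction as $|J(\x)|^{-1}=|\det M|^{-1}=|\det M^{-1}|$, the two statements are consistent exactly through this inversion, and the stated product is the value of that correction factor.

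I expect the main obstacle to be bookkeeping this reciprocal correctly together with its analytic consequences. Unlike $\phi$, the function $\psi$ is only meromorphic, with simple poles at $\lambda\in 2\pi i\,\mathbb{Z}\setminus\{0\}$; these are exactly the eigenvalues at which $\det M=0$, i.e. the critical points of $\exp$ (its cut locus), where the stated product diverges and the change of variables is invalid. Thus the stated formula holds verbatim only on the regular set where $\exp$ is a local diffeomorphism — the same hypothesis under which Theorem~\ref{thm:main-text} was established via Lemma~\ref{lemma:local-diffeomorphism}. The remaining routine points — convergence of the series, the identity $\det\phi(\ad_\x)=\prod_\lambda \phi(\lambda)$ for Jordan blocks, and the legitimacy of discarding the zero eigenvalues — I would dispatch with the continuity and triangularization remarks above.
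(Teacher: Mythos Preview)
Your proposal is correct and follows essentially the same spectral-mapping route the paper takes: identify the power series as $\phi(\ad_\x)$ with $\phi(z)=(1-e^{-z})/z$, reduce $\det\phi(\ad_\x)$ to $\prod_{\lambda}\phi(\lambda)$ via triangularization, and drop the zero eigenvalues since $\phi(0)=1$. Your explicit handling of the reciprocal orientation is a genuine catch rather than a deviation: the two displayed definitions of $J(\x)$ in the paper --- as $\det M$ in Theorem~\ref{thm:main-text} and as the product here --- are in fact reciprocals, and the worked $\SO{3}$ and $\SE{3}$ formulas (e.g.\ $J(\vv)=\|\vv\|^2/(2-2\cos\|\vv\|)$) agree with the present statement, i.e.\ with $|\det M|^{-1}$, exactly as you argue.
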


\begin{proof}
    \text{See Appendix \ref{app:computing-jacobian}}
\end{proof}

\subsection{Three Lie Group Examples}\label{sec:examples}

\paragraph{The $n$-Torus, $\mathbb{T}^N$:}
The $n$-Torus is the cross-product of $n$ times $\mathcal{S}^1$. It is an abelian (commutative) group, which is interesting to consider as it forms an important building block in the theory of Lie groups. The $n$-Torus has the following matrix representation:
\begin{equation}
T(\bm\alpha) := 
\begin{bmatrix}
    \diagentry{B_{\alpha_1}}\\
    &\diagentry{\xddots}\\
    &&\diagentry{B_{\alpha_n}}\\
\end{bmatrix}, B_\alpha := \begin{bmatrix}
   \cos \alpha & -\sin \alpha \\
   \sin \alpha & \cos \alpha
    \end{bmatrix}, \nonumber
\end{equation}
where $\bm{\alpha} = (\alpha_1,\cdots, \alpha_n)\in \mathbb{R}^n$. The basis of the Lie algebra is composed of $2n\times2n$ block-diagonal matrices with $2\times2$ blocks s.t. all blocks are $0$ except one that is equal to $L$: 
\begin{equation}
    L(\bm{\alpha}) = \begin{bmatrix}
        \diagentry{\alpha_1 L}\\
        &\diagentry{\xddots}\\
        &&\diagentry{\alpha_n L}\\
    \end{bmatrix}, \quad
     L := \begin{bmatrix}
            0 &-1 \\
            1 & 0
        \end{bmatrix} \nonumber
\end{equation}
The exponential map is s.t. the pre-image can be defined from the following relationship $L(\bm \alpha) \mapsto T(\bm \alpha)$:
\begin{equation}
    \exp(L(\bm \alpha + 2\pi\bm k)) = \exp(L(\bm \alpha)), \quad \bm k \in \mathbb{Z}^n \nonumber
\end{equation}

The pushforward density is defined as
\begin{align} \label{eq:jx-torus}
    J(L(\bm \alpha)) &= 1 \nonumber\\
    \hat q(T(\bm \alpha)|\sigma) &= \sum_{\bm k \in \mathbb{Z}^n} r\left(\bm \alpha + 2\bm k\pi | \sigma\right)
\end{align}
It can be observed that there is no change in volume. The resulting distribution on the circle or 1-Torus, which is also the Lie group SO(2), is illustrated in Appendix~\ref{app:wrap}.

\paragraph{The Special Orthogonal Group, $\SO{3}$:}
The Lie group of orientation preserving three dimensional rotations has its matrix representation defined as
\begin{equation} \label{eq:so3-def}
    \SO3:= \{R \in \GL{3, \mathbb{R}} : R^\top R = I \land \det(R) = 1 \} \nonumber  
\end{equation}
The elements of its Lie algebra $\mathfrak{so}(3)$, are represented by the 3D vector space of skew-symmetric $3 \times 3$ matrices. We choose a basis for the Lie algebra:
\begin{align}
   \hspace{-1.4em}L_{1,2,3} := \begin{bmatrix} 0 & 0 & 0 \\ 0 & 0 & -1 \\ 0 & 1 & 0 \end{bmatrix}, 
   \begin{bmatrix} 0 & 0 & 1 \\ 0 & 0 & 0 \\ -1 & 0 & 0 \end{bmatrix}, 
   \begin{bmatrix} 0 & -1 & 0 \\ 1 & 0 & 0 \\ 0 & 0 & 0 \end{bmatrix} \nonumber
\end{align}
This provides a vector space isomorphism between $\mathbb{R}^3$ and $\mathfrak{so(3)}$, written as $[\;\cdot\;]_\times : \mathbb{R}^3 \to  \mathfrak{so(3)}$. Assuming the decomposition $\mathbf{v}_\times = \theta \mathbf{u}_\times$, s.t. $\theta \in \mathbb{R}_{\ge 0}, \; \|\mathbf{u} \| =1$, the exponential map is given by the Rodrigues rotation formula 
\citep{rodrigues1840lois}
\begin{align} \label{eq:rodrigues}
    \exp(\mathbf{v}_\times) = \mathbf{I} + \sin(\theta)\mathbf{u}_\times + 
    (1 - \cos(\theta))\mathbf{u}_\times^2
\end{align}
Since $\SO3$ is a compact and connected Lie group this map is surjective, however it is not injective. The complete preimage of an arbitrary group element can be defined by first using the principle branch $\log(\cdot)$ operator to find the unique Lie algebra element next to the origin, and then observing the following relation
 \begin{equation}
     \exp(\theta \mathbf{u}_\times) =  \exp((\theta+2k\pi)\mathbf{u}_\times) \quad k\in \mathbb{Z} \nonumber
 \end{equation}
In practice, we will already have access to such an element of the Lie algebra due to the sampling approach. The pushforward density defined almost everywhere as
\begin{align} \label{eq:jx-so3}
    J(\vv) &= \frac{\|\vv\|^2}{2 - 2\cos\|\vv\|}\\
    \hat q(R|\sigma) &= \sum_{k \in \mathbb{Z}} r\left(\frac{\log(R)}{\theta(R)}(\theta(R) + 2k\pi) \bigg| \sigma\right)\frac{(\theta(R) + 2k\pi)^2}{3 - \text{tr}({R})} \nonumber,
\end{align} 
where $R \in \SO{3}$ and 
\begin{align} 
    \theta({R}) = \|\log(R)\| = \cos^{-1}\left(\frac{\text{tr}(R)-1}{2}\right) \nonumber
\end{align}
\begin{figure}[t]
    \centering
    \subfigure[$\SO{3}$]{
    \includegraphics[width=0.39\textwidth]{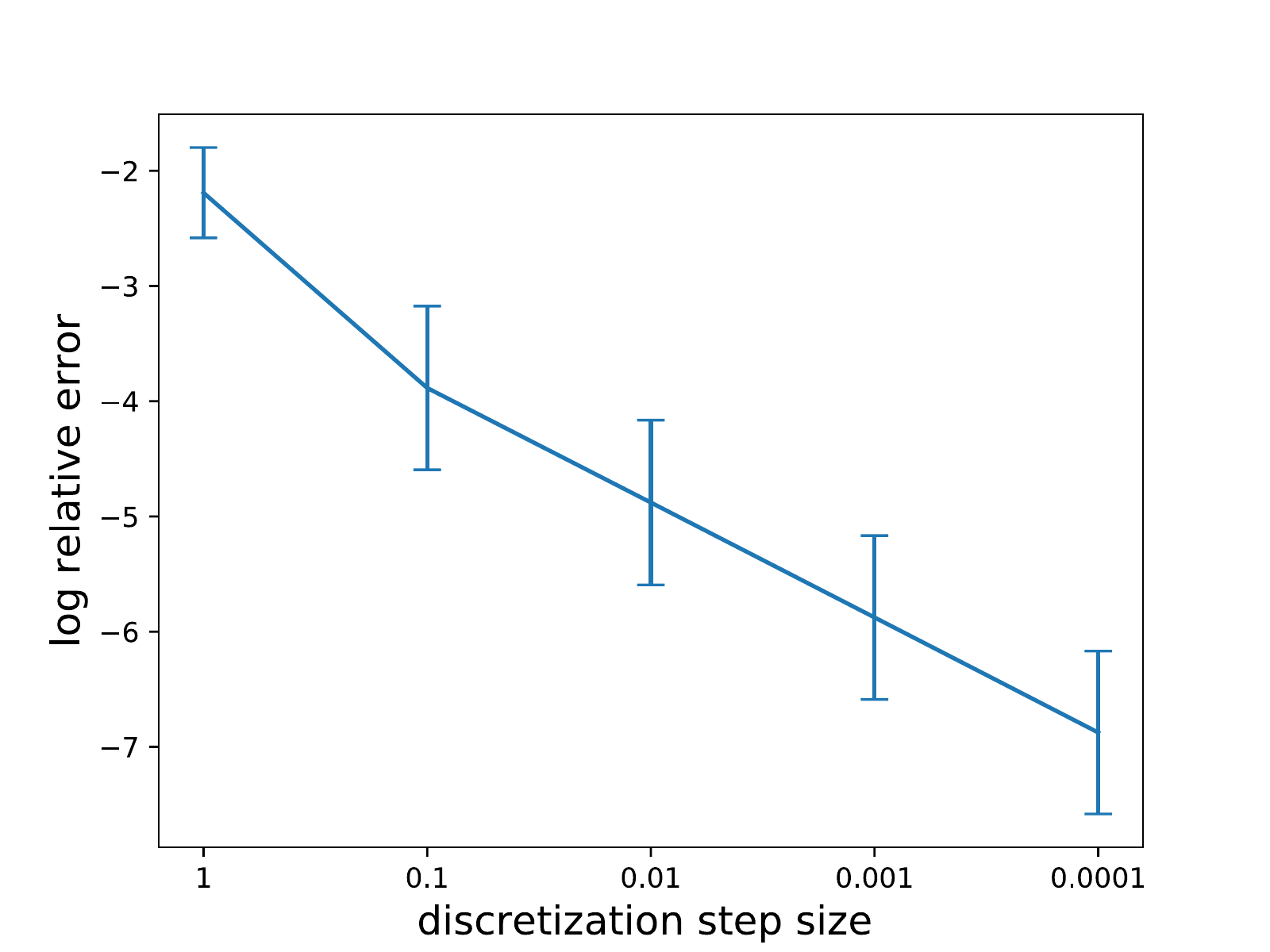}
    \label{fig:so3-jacobian}
    } 
    \subfigure[$\SE{3}$]{
    \includegraphics[width=0.39\textwidth]{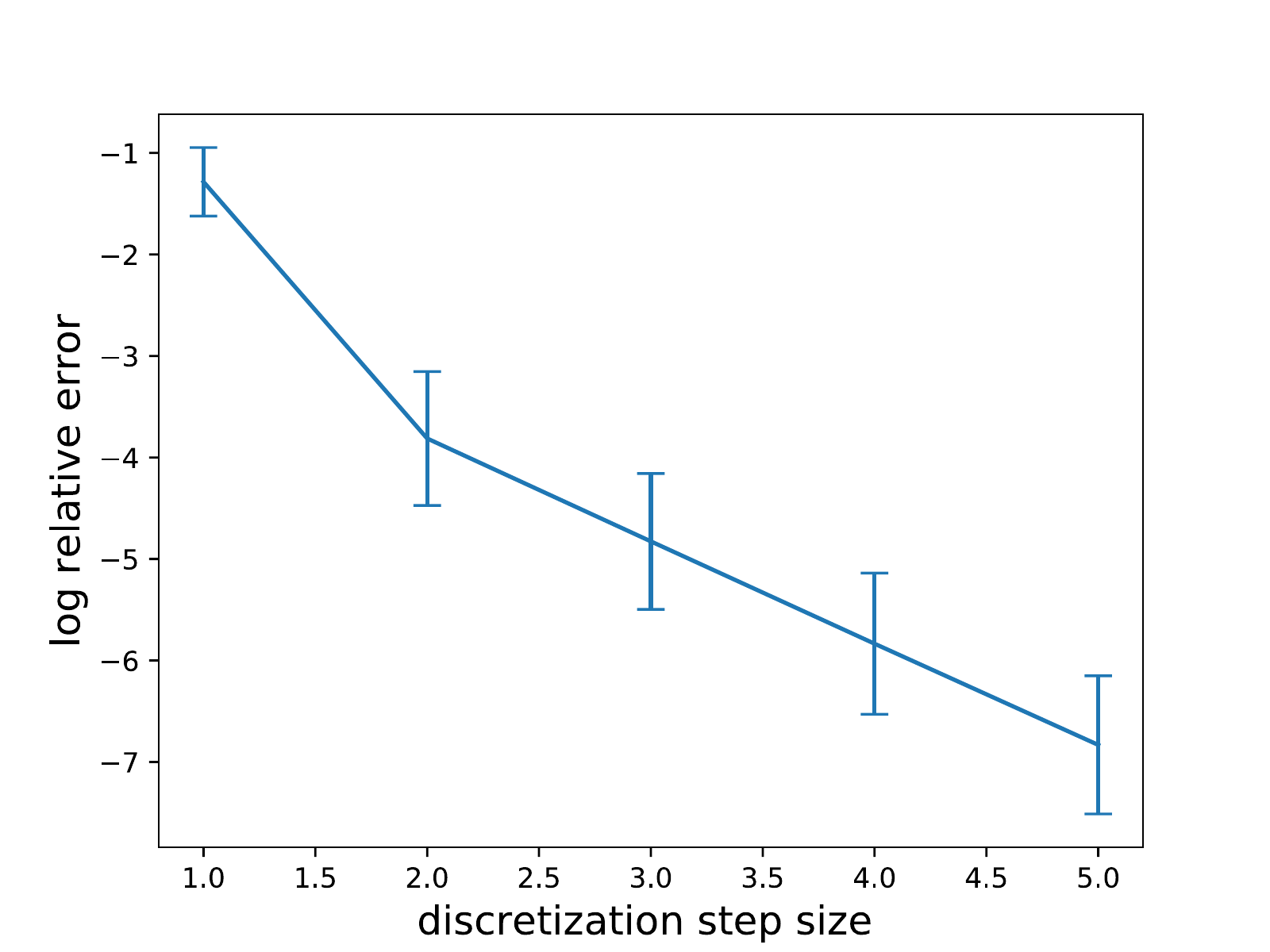}
    \label{fig:se3-jacobian}
    }
    \caption{Plotted the log relative error of analytical and numerical estimations of $J(\x)$ for the groups $\SO{3}$ and $\SE{3}$,
    equations 
    \eqref{eq:jx-so3}, \eqref{eq:jx-se3}.
    Numerical estimation of Jacobian performed by taking small discrete steps of decreasing size ($x$-axis) in each Lie algebra direction. The error is evaluated at $1000$ randomly sampled points.}
    \label{fig:jacobian-numerical-checks}
\end{figure}
\paragraph{The Special Euclidean Group, $\SE{3}$:} This Lie group extends $\SO{3}$ by also adding translations. Its matrix representation is given by
\begin{equation}
    \begin{bmatrix}
        R & {\bf u} \\ 
        0 & 1
    \end{bmatrix}
    , \quad {\bf u} \in \mathbb{R}^3, R \in \SO{3} \nonumber
\end{equation}
The Lie algebra, $\mathfrak{se(3)}$ is similarly built concatenating a skew-symmetric matrix and a $\mathbb{R}^3$ vector
\begin{equation}
S({\bm \omega}, {\bf u }) = 
    \begin{bmatrix}
        \bm\omega_\times & \bf{u} \\ 
        0 & 0
    \end{bmatrix}
    , \quad \bf{u},\;{\bm \omega} \in \mathbb{R}^3, \nonumber
\end{equation}
A basis can easily be found combining the basis elements for $\mathfrak{so(3)}$ and the canonical basis of $\mathbb{R}^3$. The exponential map from algebra to group is defined as
\begin{equation}
    \begin{bmatrix}
        \bm{\omega}_\times & {\bf u} \\ 
        0 & 0
    \end{bmatrix}
    \mapsto\begin{bmatrix}
        \exp(\bm\omega_\times) & V {\bf u} \\ 
        0 & 1
    \end{bmatrix}, \nonumber
\end{equation}
where $\exp(\cdot)$ is defined as in equation \eqref{eq:rodrigues},
and
\begin{equation}
V = I + \left(\frac{1- \cos(\|\bm\omega\|)}{\|\bm\omega\|^2}\right)\bm\omega_\times + \left(\frac{\|\bm\omega\|- \sin(\|\bm\omega\|)}{\|\bm\omega\|^3}\right)\bm\omega_\times^2 \nonumber
\end{equation}
From the expression of the exponential map it is clear that the preimage can be described similar to $\SO{3}$. Finally the pushforward density is defined almost everywhere as
\begin{align} \label{eq:jx-se3}
   &\;\;\;\;J(S(\bm\omega, {\bf u})) = \left[\frac{\|\bm\omega\|^2}{2 - 2\cos\|\bm\omega\|} \right]^2 
    \\
    &\hat q(M|\sigma) = \nonumber 
    \\
    &\sum_{k \in \mathbb{Z}} r \left(S \left( \frac{\bm  \omega}{\|\bm\omega\|} (\|\bm\omega\| + 2k\pi),{\bf u} \right) \bigg| \sigma \right) \left[\frac{\|\bm\omega + 2k\pi\|^2}{2 - 2\cos\|\bm\omega\|} \right]^2, \nonumber
\end{align}
where $\omega$ and $u$ such that $S(\bm \omega, {\bf u}) = \log(M)$. The $\log$ can be easily defined from the $\log$ in $\SO{3}$.

\section{RELATED WORK} \label{sec:related-work}

Various work has been done in extending the reparameterization trick to an ever growing amount of variational families. \citet{implicit_reparam_figurnov} provide a detailed overview, classifying existing approaches into (1) finding \emph{surrogate distributions}, which in the absence of a reparameterization trick for the desired distribution, attempts to use an acceptable alternative distribution that \emph{can} be reparameterized instead \citep{stick}.
(2) \emph{Implicit reparameterization gradients},
or \emph{pathwise} gradients,
introduced in machine learning by \citet{salimans2013fixed}, extended by \citet{graves2016stochastic}, and later generalized by \citet{implicit_reparam_figurnov} using implicit differentiation. (3) \emph{Generalized reparameterizations} finally try to generalize the standard approach as described in the preliminaries section. Notable are \citep{generalized_reparam_ruiz}, which relies on defining a suitable invertible standardization function to allow a weak dependence between the noise distribution and the parameters, and the closely related \citep{accept_reject_reparam} focusing on rejection sampling.

All of the techniques above can be used orthogonal to our approach, by defining different distributions over the Lie algebra. While some even allow for reparameterizable densities on spaces with non-trivial topologies\footnote{For example \citet{s-vae} reparameterize the von Mises-Fisher distribution which is defined on $\mathcal{S}^M$, with $\mathcal{S}^1$ isomorphic to the Lie group $\SO{2}$.}, none of them provide the tools to correctly take into account the volume change resulting from pushing densities defined on $\mathbb{R}^N$ to arbitrary Lie groups. In that regard the ideas underlying \emph{normalizing flows} (NF) \citep{normalizing-flows} are the closest to our approach, in which probability densities become increasingly complex through the use of injective maps. Two crucial differences however with our problem domain, are that the change of variable computation now needs to take into account a transformation of the underlying space, as well as the fact that the exponential map is generally not injective. NF can be combined with our work to create complex distributions on Lie groups, as is demonstrated in the next section.

\begin{figure}[t]
    \centering
    \subfigure[Line symmetry]{
    \includegraphics[width=.37\textwidth]{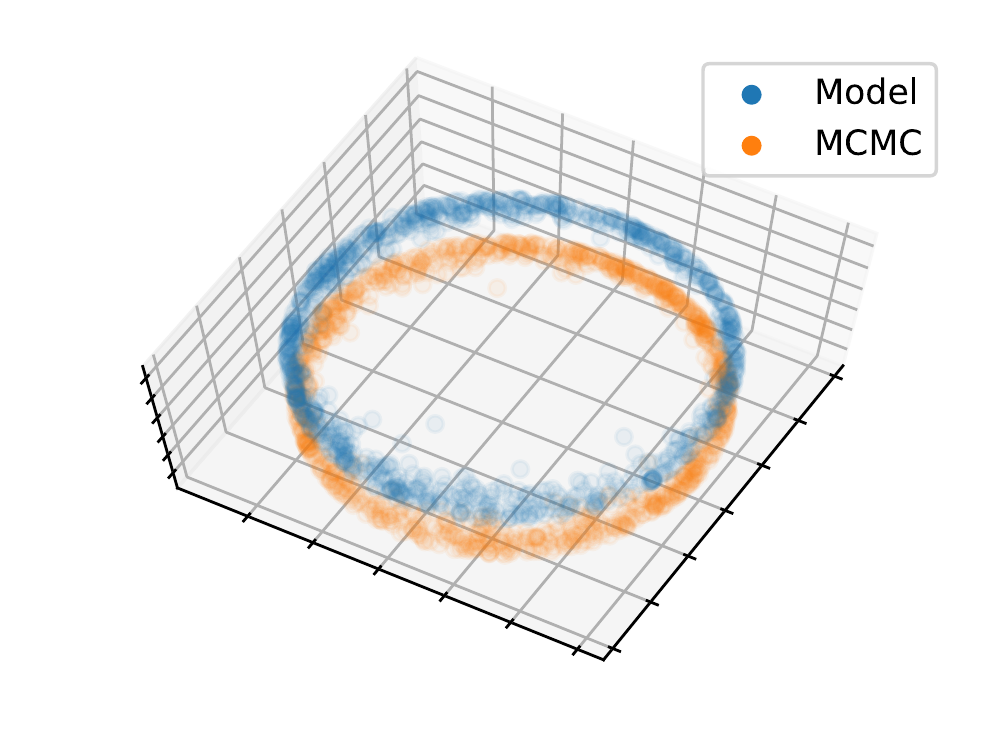}
    }
    \subfigure[Triangular symmetry]{
    \includegraphics[width=.37\textwidth]{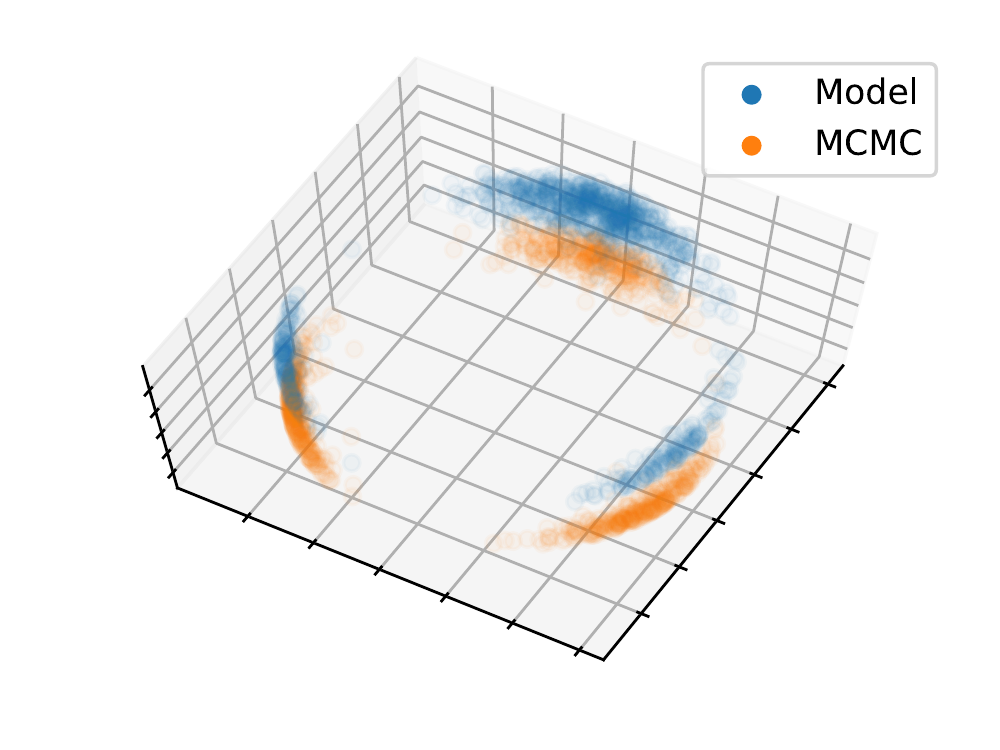}
    }
    \caption{Samples of the Variational Inference model and Markov Chain Monte Carlo of Experiment \ref{exp:vi}. Outputs are shifted in the z-dimension for clarity.}
    \label{fig:unsupervised}
\end{figure}

\begin{figure*}[t]
    \centering
    \includegraphics[width=0.85\textwidth]{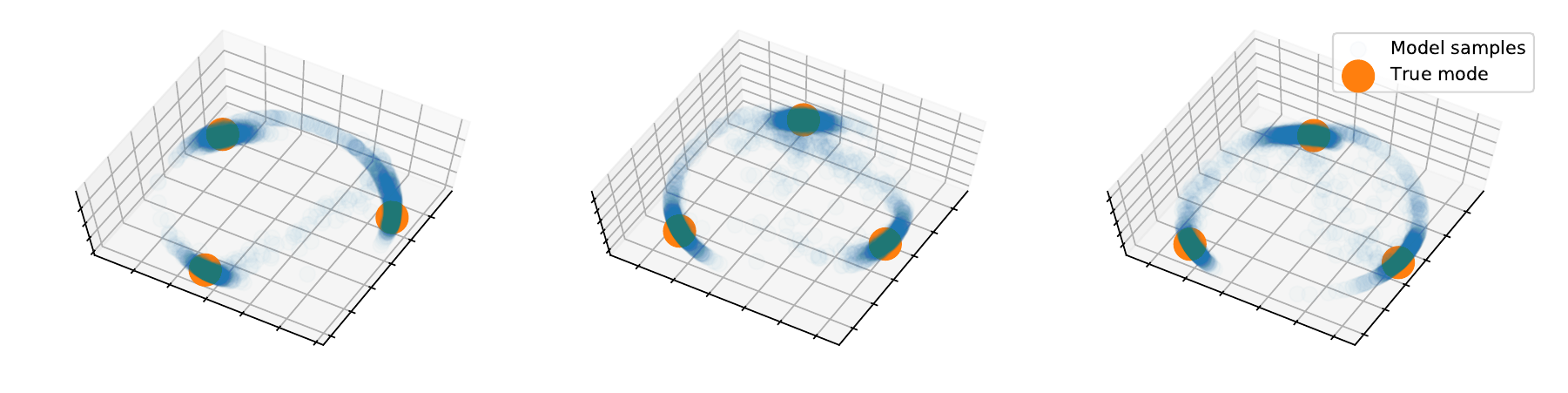}
    \caption{Samples from conditional $\SO{3}$ distribution $p(g|x)$ for different $x$, where $x$ has the symmetry of rotations of $\sfrac{2\pi}{3}$ along one axis of Experiment \ref{exp:supervised}. Shown are PCA embeddings of the matrices, learned using Locally Invertible Flow (LI-Flow) with Maximum Likelihood Estimation.}
    \label{fig:supervised}
\end{figure*}

Defining and working with distributions on homogeneous spaces, including Lie groups, was previously investigated in \citep{chirikjian2000engineering, chirikjian2010information, wolfe2011bayesian, Chirikjian2012-gr,  Chirikjian2016-ch, ming2018variational}.
\citet{barfoot2014associating} also discuss implicitly defining distributions on Lie groups, through a distribution on the algebra, focusing on the case of $\SE{3}$. However, these works only consider the neighbourhood of the identity, making the exponential map injective, but the distribution less expressive. In addition, generally only Gaussian distributions on the Lie Algebra are used in past work.
\citet{cohen2015harmonic} devised harmonic exponential families which are a powerful family of distributions defined on homogeneous spaces. These works all did not concentrate on making the distributions reparameterizable. \citet{mallasto2018wrapped} defined a wrapped Gaussian process on Riemannian manifolds  through the pushforward of the $\exp$ map without providing an expression for the density.

\section{EXPERIMENTS} \label{sec:experiments}
We conduct two experiments on $\SO{3}$ to highlight the potential of using complex and multimodal reparameterizable densities on Lie groups\footnote{See Appendix \ref{app:exp-additional-details} for additional details.}.

\paragraph{Normalizing Flow}
To construct multimodal distributions Normalizing Flows are used \citep{dinh2015nice, normalizing-flows}:
\begin{align*}
    \mathbb{R}^d \overset{f}{\longrightarrow} \mathbb{R}^d \overset{r \cdot \tanh}{\longrightarrow} \mathbb{R}^d \cong \alg \overset{\exp}{\longrightarrow} G,
\end{align*}
where $f$ is an invertible Neural Network consisting of several coupling layers \citep{dinh2015nice}, the $\tanh(\cdot)$ function is applied to the norm and a unit Gaussian is used as initial distribution. The hyperparameter $r$ determines the non-injectivity of the exp map and thus of the flow. $r$ must be chosen such that  the image of $r\cdot \tanh$ is contained in the regular region of the $\exp$ map. For sufficiently small $r$, the entire flow is invertible, but may not be surjective, while for bigger $r$ the flow is non-injective, with a finite inverse set at each $g \in G$, as $\exp$ is a local diffeomorphism and the image of $r\cdot \tanh$ has compact support. For details see Appendix \ref{app:main-theorem}.
For such a \emph{Locally Invertible Flow} (LI-Flow), the likelihood evaluation requires us to branch at the non-injective function and traverse the flow backwards for each element in the preimage.

\subsection{Variational Inference}\label{exp:vi}

In this experiment we estimate the \SO{3} group actions that leave a symmetrical object invariant. This highlights how our method can be used in probabilistic generative models and unsupervised learning tasks. We have a generative model $p(x|g)$ and a uniform prior over the latent variable $g$. Using Variational Inference we optimize the Evidence Lower Bound to infer an approximate posterior $q(g|x)$ modeled with LI-Flow.

Results are shown in Fig. \ref{fig:unsupervised} and compared to Markov Chain Monte Carlo samples. We observe the symmetries are correctly inferred.

\subsection{Maximum Likelihood Estimation}\label{exp:supervised}
To demonstrate the versatility of the reparameterizable Lie group distribution, we learn supervised pose estimation by learning a multimodal conditional distribution using MLE, as in \citep{dinh2017density}.

We created data set: $(x, g'=\exp(\epsilon)g)$ of objects $x$ rotated to pose $g$ and algebra noise samples $\epsilon$. The object is symmetric for the subgroup corresponding to rotations of $\sfrac{2\pi}{3}$ along one axis. We train a LI-Flow model by maximizing:
$\mathbb{E}_{x,g'}\log p(g'|x)$. The results in Fig. \ref{fig:supervised} reveal that the LI-Flow successfully learns a multimodal conditional distribution.

\section{CONCLUSION} \label{sec:conclusion}
In this paper we have presented a general framework to reparameterize distributions on Lie groups (ReLie), that enables the extension of previous results in reparameterizable densities to arbitrary Lie groups. Furthermore, our method allows for the creation of complex and multimodal distributions through normalizing flows, for which we defined a novel \emph{Locally Invertible Flow} (LI-Flow) example on the group $\SO{3}$. We empirically showed the necessity of LI-Flows in estimating uncertainty in problems containing discrete or continuous symmetries.

This work provides a bridge to leverage the advantages of using deep learning to estimate uncertainty for numerous application domains in which Lie groups play an important role. In future work we plan on further exploring the directions outlined in our experimental section to more challenging instantiations. Specifically, learning rigid body motions from raw point clouds or modeling environment dynamics for applications in optimal control present exciting possible extensions.  

\clearpage
\newpage

\subsubsection*{Acknowledgements}
The authors would like to thank Rianne van den Berg and Taco Cohen for suggestions and insightful discussions to improve this manuscript. 

\bibliography{main}

\begin{thebibliography}{}

\bibitem[{Alexandrino} and {Bettiol}, 2015]{ab-lie2015}
{Alexandrino}, M.~M. and {Bettiol}, R.~G. (2015).
\newblock {\em {Lie groups and geometric aspects of isometric actions.}}
\newblock Cham: Springer.

\bibitem[Barfoot and Furgale, 2014]{barfoot2014associating}
Barfoot, T.~D. and Furgale, P.~T. (2014).
\newblock Associating uncertainty with three-dimensional poses for use in
  estimation problems.
\newblock {\em IEEE Transactions on Robotics}, 30(3):679--693.

\bibitem[Bonnet, 1964]{bonnet1964transformations}
Bonnet, G. (1964).
\newblock Transformations des signaux al{\'e}atoires a travers les systemes non
  lin{\'e}aires sans m{\'e}moire.
\newblock In {\em Annales des T{\'e}l{\'e}communications}, volume~19, pages
  203--220. Springer.

\bibitem[{Caron} and {Traynor}, 2005]{zeroset2005}
{Caron}, R. and {Traynor}, T. (2005).
\newblock {The zero set of polynomials}.
\newblock {\em
  \url{http://www1.uwindsor.ca/math/sites/uwindsor.ca.math/files/05-03.pdf}}.

\bibitem[Chirikjian, 2010]{chirikjian2010information}
Chirikjian, G.~S. (2010).
\newblock Information-theoretic inequalities on unimodular lie groups.
\newblock {\em Journal of geometric mechanics}, 2(2):119.

\bibitem[Chirikjian, 2011]{Chirikjian2012-gr}
Chirikjian, G.~S. (2011).
\newblock {\em Stochastic Models, Information Theory, and Lie Groups, Volume 2:
  Analytic Methods and Modern Applications}, volume~2.
\newblock Springer Science \& Business Media.

\bibitem[Chirikjian and Kyatkin, 2000]{chirikjian2000engineering}
Chirikjian, G.~S. and Kyatkin, A.~B. (2000).
\newblock {\em Engineering applications of noncommutative harmonic analysis:
  with emphasis on rotation and motion groups}.
\newblock CRC press.

\bibitem[Chirikjian and Kyatkin, 2016]{Chirikjian2016-ch}
Chirikjian, G.~S. and Kyatkin, A.~B. (2016).
\newblock {\em Harmonic Analysis for Engineers and Applied Scientists: Updated
  and Expanded Edition}.
\newblock Courier Dover Publications.

\bibitem[Cohen and Welling, 2015]{cohen2015harmonic}
Cohen, T.~S. and Welling, M. (2015).
\newblock Harmonic exponential families on manifolds.
\newblock {\em ICML}.

\bibitem[Davidson et~al., 2018]{s-vae}
Davidson, T.~R., Falorsi, L., Cao, N.~D., Kipf, T., and Tomczak, J.~M. (2018).
\newblock {Hyperspherical Variational Auto-Encoders}.
\newblock {\em UAI}.

\bibitem[de~Haan and Falorsi, 2018]{dehaan18homeo-auto}
de~Haan, P. and Falorsi, L. (2018).
\newblock Topological constraints on homeomorphic auto-encoding.
\newblock {\em NIPS Workshop}.

\bibitem[Dinh et~al., 2014]{dinh2015nice}
Dinh, L., Krueger, D., and Bengio, Y. (2014).
\newblock Nice: Non-linear independent components estimation.
\newblock {\em ICLR}.

\bibitem[Dinh et~al., 2017]{dinh2017density}
Dinh, L., Sohl-Dickstein, J., and Bengio, S. (2017).
\newblock Density estimation using real nvp.
\newblock {\em ICLR}.

\bibitem[{Duistermaat} and {Kolk}, 2000]{duistkolk2000lie}
{Duistermaat}, J.~J. and {Kolk}, J. A.~C. (2000).
\newblock {\em {Lie groups.}}
\newblock Berlin: Springer.

\bibitem[Falorsi et~al., 2018]{falorsi2018explorations}
Falorsi, L., de~Haan, P., Davidson, T.~R., De~Cao, N., Weiler, M., Forr{\'e},
  P., and Cohen, T.~S. (2018).
\newblock Explorations in homeomorphic variational auto-encoding.
\newblock {\em ICML Workshop}.

\bibitem[Fan et~al., 2015]{fan2015fast}
Fan, K., Wang, Z., Beck, J., Kwok, J., and Heller, K.~A. (2015).
\newblock Fast second order stochastic backpropagation for variational
  inference.
\newblock In {\em NIPS}, pages 1387--1395.

\bibitem[Figurnov et~al., 2018]{implicit_reparam_figurnov}
Figurnov, M., Mohamed, S., and Mnih, A. (2018).
\newblock Implicit reparameterization gradients.
\newblock In Bengio, S., Wallach, H., Larochelle, H., Grauman, K.,
  Cesa-Bianchi, N., and Garnett, R., editors, {\em Advances in Neural
  Information Processing Systems 31}, pages 439--450. Curran Associates, Inc.

\bibitem[Graves, 2016]{graves2016stochastic}
Graves, A. (2016).
\newblock Stochastic backpropagation through mixture density distributions.
\newblock {\em arXiv preprint arXiv:1607.05690}.

\bibitem[Hall, 2003]{hall2003lie}
Hall, B. (2003).
\newblock {\em Lie Groups, Lie Algebras, and Representations: An Elementary
  Introduction}.
\newblock Graduate Texts in Mathematics. Springer.

\bibitem[Hermann, 1980]{helgason1}
Hermann, R. (1980).
\newblock Differential geometry, lie groups, and symmetric spaces (sigurdur
  helgason).
\newblock {\em SIAM Review}, 22(4):524--526.

\bibitem[Howe, 1989]{helgason2}
Howe, R. (1989).
\newblock Review: Sigurdur helgason, groups and geometric analysis. integral
  geometry, invariant differential operators and spherical functions.
\newblock {\em Bull. Amer. Math. Soc. (N.S.)}, 20(2):252--256.

\bibitem[Jordan et~al., 1999]{jordan1999introduction}
Jordan, M.~I., Ghahramani, Z., Jaakkola, T.~S., and Saul, L.~K. (1999).
\newblock An introduction to variational methods for graphical models.
\newblock {\em Machine learning}, 37(2):183--233.

\bibitem[Kingma and Welling, 2013]{kingma-vae13}
Kingma, D.~P. and Welling, M. (2013).
\newblock Auto-encoding variational bayes.
\newblock {\em CoRR}.

\bibitem[Klenke, 2014]{Klenke2014}
Klenke, A. (2014).
\newblock {\em {Probability Theory - A Comprehensive Course}}.
\newblock Universitext. Springer, London, second edition.

\bibitem[Lee, 2010]{lee2010introduction}
Lee, J. (2010).
\newblock {\em Introduction to topological manifolds}, volume 202.
\newblock Springer Science \& Business Media.

\bibitem[Lee, 2012]{LeeSmooth}
Lee, J.~M. (2012).
\newblock {\em Introduction to smooth manifolds}.
\newblock Graduate texts in mathematics. Springer, New York, NY [u.a.], 2. ed.
  edition.

\bibitem[Mallasto and Feragen, 2018]{mallasto2018wrapped}
Mallasto, A. and Feragen, A. (2018).
\newblock Wrapped gaussian process regression on riemannian manifolds.
\newblock {\em CVPR}.

\bibitem[Milnor, 1976]{milnor1976}
Milnor, J. (1976).
\newblock {Curvatures of left invariant metrics on lie groups}.
\newblock {\em Advances in Mathematics}, 21(3):293 -- 329.

\bibitem[Ming, 2018]{ming2018variational}
Ming, Y. (2018).
\newblock Variational bayesian data analysis on manifold.
\newblock {\em Control Theory and Technology}, 16(3):212--220.

\bibitem[Mnih and Gregor, 2014]{mnih2014neural}
Mnih, A. and Gregor, K. (2014).
\newblock Neural variational inference and learning in belief networks.
\newblock In {\em ICML}.

\bibitem[Moler and Van~Loan, 2003]{moler2003nineteen}
Moler, C. and Van~Loan, C. (2003).
\newblock Nineteen dubious ways to compute the exponential of a matrix,
  twenty-five years later.
\newblock {\em SIAM review}, 45(1):3--49.

\bibitem[Naesseth et~al., 2017]{accept_reject_reparam}
Naesseth, C., Ruiz, F., Linderman, S., and Blei, D. (2017).
\newblock {Reparameterization Gradients through Acceptance-Rejection Sampling
  Algorithms}.
\newblock {\em AISTATS}, pages 489--498.

\bibitem[Nalisnick and Smyth, 2017]{stick}
Nalisnick, E. and Smyth, P. (2017).
\newblock Stick-breaking variational autoencoders.
\newblock {\em ICLR}.

\bibitem[Paisley et~al., 2012]{paisley2012variational}
Paisley, J., Blei, D., and Jordan, M. (2012).
\newblock Variational bayesian inference with stochastic search.
\newblock {\em ICML}.

\bibitem[Price, 1958]{price1958useful}
Price, R. (1958).
\newblock A useful theorem for nonlinear devices having gaussian inputs.
\newblock {\em IRE Transactions on Information Theory}, 4(2):69--72.

\bibitem[Rezende and Mohamed, 2015]{normalizing-flows}
Rezende, D. and Mohamed, S. (2015).
\newblock Variational inference with normalizing flows.
\newblock {\em ICML}, 37:1530--1538.

\bibitem[Rezende et~al., 2014]{pmlr-v32-rezende14}
Rezende, D.~J., Mohamed, S., and Wierstra, D. (2014).
\newblock Stochastic backpropagation and approximate inference in deep
  generative models.
\newblock {\em ICML}, pages 1278--1286.

\bibitem[Rodrigues, 1840]{rodrigues1840lois}
Rodrigues, O. (1840).
\newblock Des lois g{\'e}om{\'e}triques qui r{\'e}gissent les d{\'e}placements
  d'un syst{\`e}me solide dans l'espace: et de la variation des cordonn{\'e}es
  provenant de ces d{\'e}placements consid{\'e}r{\'e}s ind{\'e}pendamment des
  causes qui peuvent les produire.
\newblock {\em J Mathematiques Pures Appliquees}, 5:380--440.

\bibitem[Ruiz et~al., 2016]{generalized_reparam_ruiz}
Ruiz, F.~R., Titsias RC~AUEB, M., and Blei, D. (2016).
\newblock The generalized reparameterization gradient.
\newblock In Lee, D.~D., Sugiyama, M., Luxburg, U.~V., Guyon, I., and Garnett,
  R., editors, {\em NIPS}, pages 460--468. Curran Associates, Inc.

\bibitem[Salimans et~al., 2013]{salimans2013fixed}
Salimans, T., Knowles, D.~A., et~al. (2013).
\newblock Fixed-form variational posterior approximation through stochastic
  linear regression.
\newblock {\em Bayesian Analysis}, 8(4):837--882.

\bibitem[Schreiber and Bartels, 2018]{nlab:volume_form}
Schreiber, U. and Bartels, T. (2018).
\newblock volume form.
\newblock \url{http://ncatlab.org/nlab/show/volume%20form}.
\newblock Revision 10.

\bibitem[Titsias and L{\'a}zaro-Gredilla, 2014]{titsias2014doubly}
Titsias, M. and L{\'a}zaro-Gredilla, M. (2014).
\newblock Doubly stochastic variational bayes for non-conjugate inference.
\newblock In {\em ICML}, pages 1971--1979.

\bibitem[Williams, 1992]{williams1992simple}
Williams, R.~J. (1992).
\newblock Simple statistical gradient-following algorithms for connectionist
  reinforcement learning.
\newblock {\em Machine learning}, 8(3-4):229--256.

\bibitem[Wolfe and Mashner, 2011]{wolfe2011bayesian}
Wolfe, K.~C. and Mashner, M. (2011).
\newblock Bayesian fusion on lie groups.
\newblock {\em Journal of Algebraic Statistics}, 2(1).

\end{thebibliography}

\newpage
\newpage
\onecolumn

\appendix

\section{EXPERIMENTS: ADDITIONAL DETAILS} \label{app:exp-additional-details}

\subsection{A Small Note on Alternative Proxies} \label{app:a-small-note-on-exps}

The main focus of the experiments in this work is to show how our framework enables the usage of reparameterizable distributions on arbitrary Lie groups in a probabilistic deep learning setting, which to the best of our knowledge is not possible with current alternatives. The experiments therefore represent typical prototypes of applications, which can now be tackled using a general approach. To avoid confusion, it might very well be possible to design specialized one-off solutions for learning distributions on specific Lie groups, however, in this paper we aim at providing a \emph{general} framework for doing this task. 

\subsection{Supplementary Details on VI Experiment} \label{app:exp-supp-vi}

\paragraph{Setup} In this proto-typical Variational Inference experiment we provide an intuitive example of the need for complex distributions in the difficult task of estimating which group actions of $\SO{3}$ leave a symmetrical object invariant. For didactic purposes we take two ordered points, $\x_1, \x_2 \in \R{3}$, and perform LI-Flow VI to learn the approximate posterior over rotations. We evaluate the learned distribution by comparing its samples to those of the true posterior obtained using the Metropolis-Hastings algorithm.

\paragraph{Results} Results are shown in Fig. \ref{fig:unsupervised}. As expected, the discovered distribution over $\SO{3}$ group actions is a rotational subgroup, $\mathcal{S}^1$. Clearly, the learned approximate posterior almost perfectly matches the true posterior. Instead, using a simple centered distribution such as the pushforward of a Gaussian as the variational family, would make learning the observed topology problematic, as all probability mass would focus around a single rotation.

\subsection{Supplementary Details on MLE Experiment} \label{app:exp-supp-mle}

\paragraph{Setup} We generate a random vector $x_0$ that has a linear Lie group action. Then we create a random variable $g \in G$ uniformly distributed representing the pose and a noisy version $g' = \exp(\epsilon)g$ with $\epsilon \sim \mathcal(0, 0.01)$. We observe $x' = g'(x_0)$ and need to predict $g$. This corresponds to having noisy observations of an object $x$ from different poses and needing to estimate the pose $p(g|x')$. When the object is symmetrical, that is a subgroup $D \subset G$ exists such that $d(x_0)=x_0$ for all $d \in D$, $p(g|x')$ should have modes corresponding to the values in $D$.

\paragraph{Results} This is evaluated on $\SO{3}$. The object $x_0$ is taken to an element of the representation space of $\SO{3}$, as in \citep{falorsi2018explorations}. It is made symmetric by taking the average of $\{d(x_0) | d \in D\}$. $D$ is taken to be the cyclic group of order 3 corresponding to rotations of $\sfrac{2\pi}{3}$ along one axis. The results show in Figure \ref{fig:supervised} reveal that the LI Flow successfully learns complicated conditional distributions.

\section{DISTRIBUTIONS ON THE CIRCLE}\label{app:wrap}

As an example of how the reparametrizable distribution on Lie groups behaves in practice, we illustrate in Figure \ref{fig:wrap} the distribution that arises when a univariate Normal distribution is pushed forward to the Lie group $SO(2)$, homeomorphic to the circle, with the exponential map. 

\begin{figure*}[!ht]
    \centering
    \subfigure[$\mathcal{N}(0, 0.5)$]{
    \includegraphics[width=0.3\textwidth]{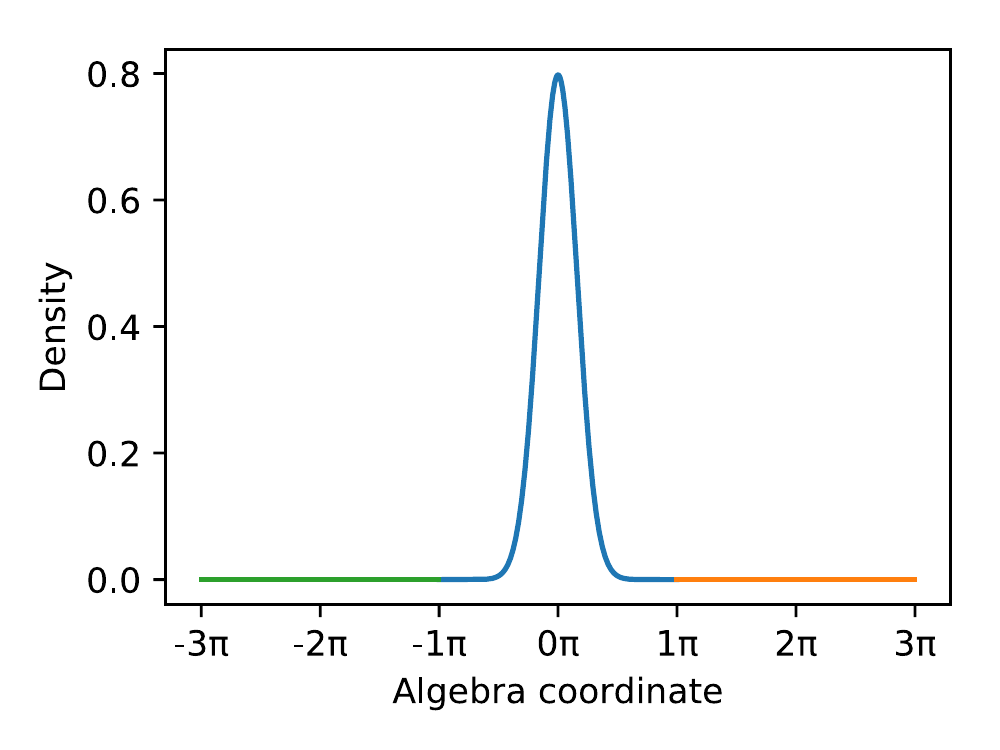}
    } 
    \subfigure[$\exp_*\mathcal{N}(0, 0.5)$]{
    \includegraphics[width=0.3\textwidth]{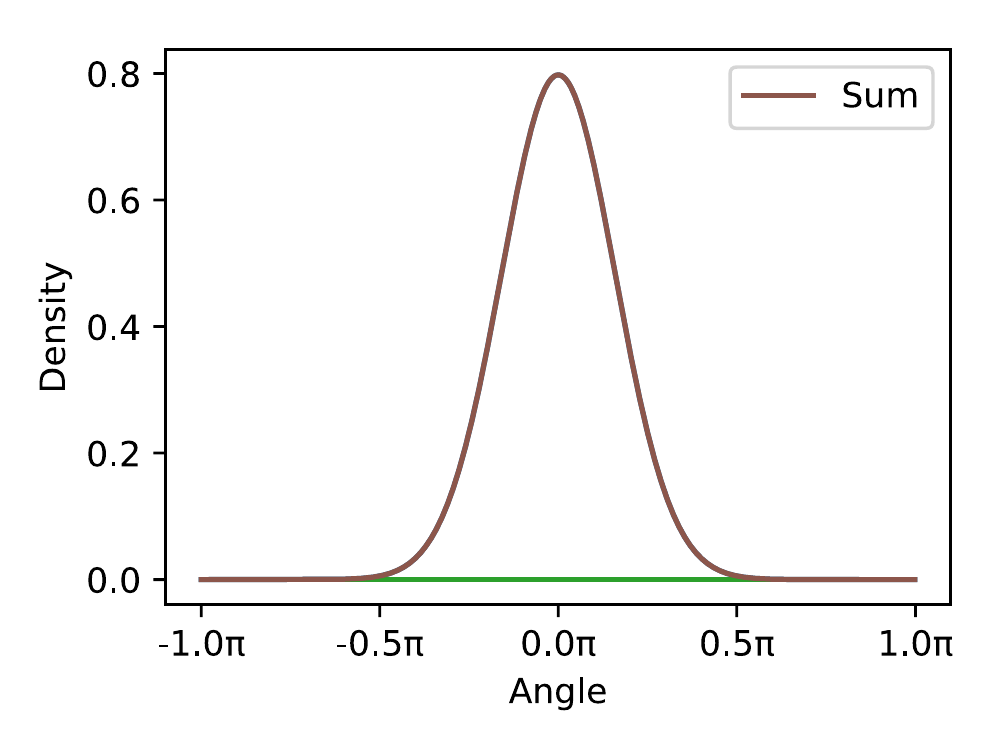}
    } 
    \subfigure[Illustration of $\exp_*\mathcal{N}(0, 0.5)$]{
    \includegraphics[width=0.3 \textwidth]{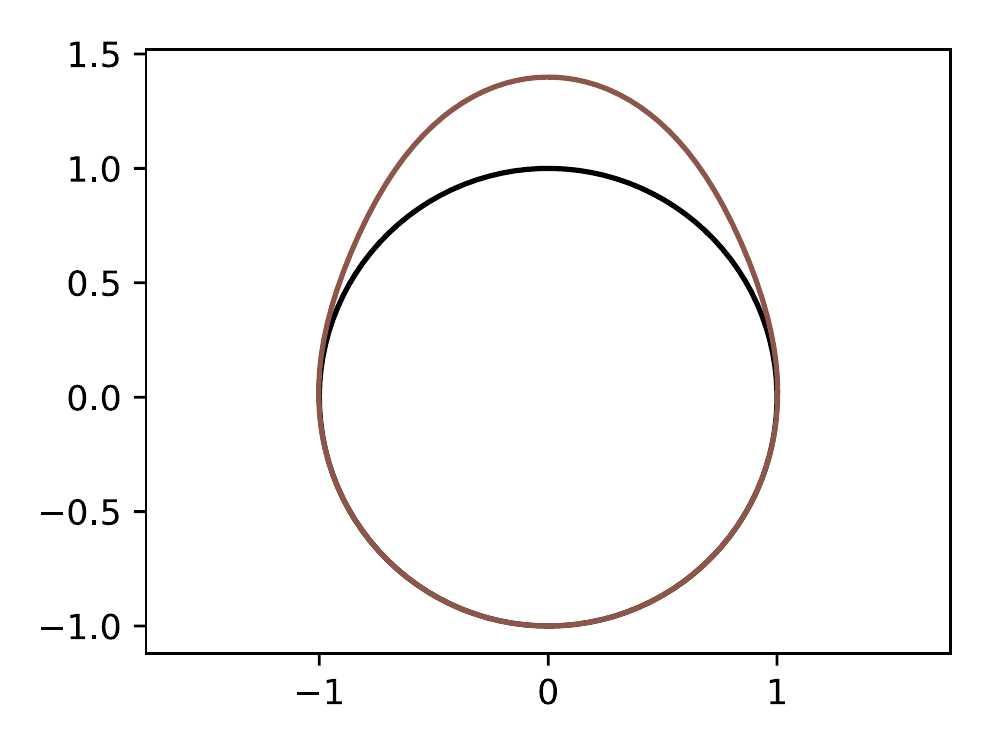}
    }
    \centering
    \subfigure[$\mathcal{N}(0, 2)$]{
    \includegraphics[width=0.3\textwidth]{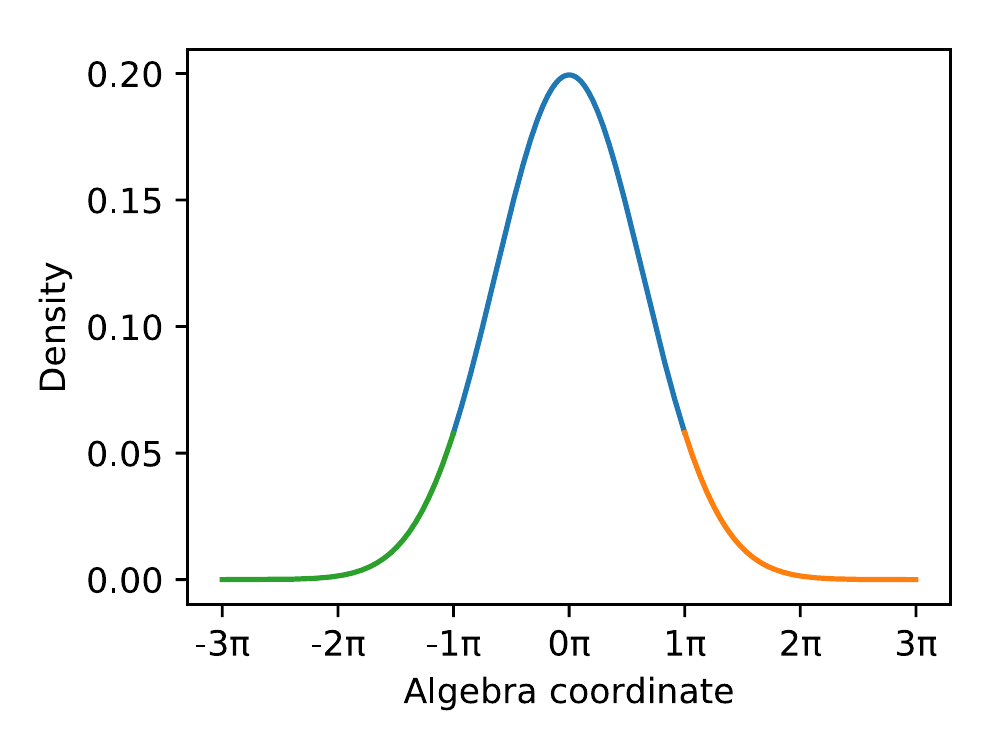}
    } 
    \subfigure[$\exp_*\mathcal{N}(0, 2)$]{
    \includegraphics[width=0.3\textwidth]{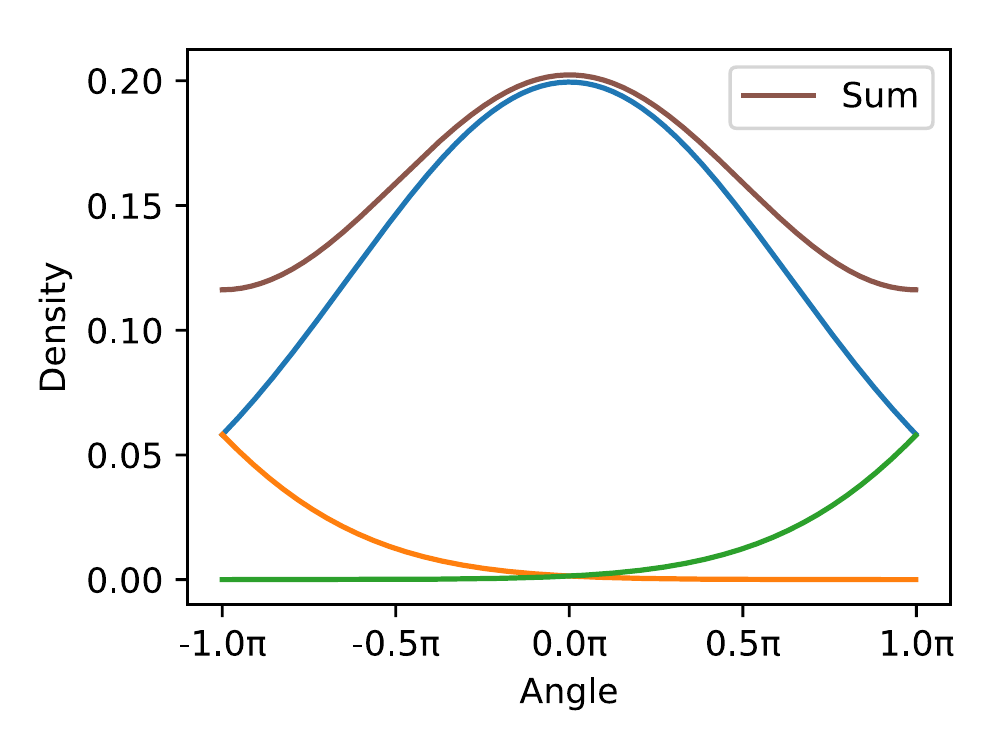}
    } 
    \subfigure[Illustration of $\exp_*\mathcal{N}(0, 2)$]{
    \includegraphics[width=0.3 \textwidth]{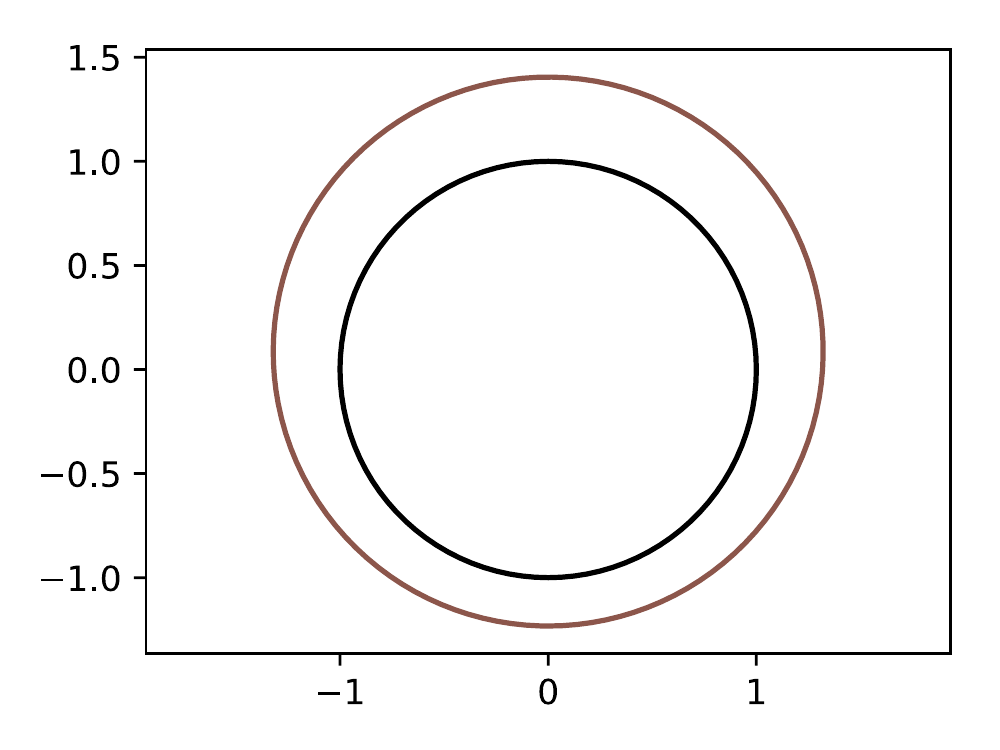}
    }
    \caption{Density of pushforward of Normal distributions with zero mean and scale 0.5 and 2 to the Lie group $SO(2)$.
    Following Equation~\ref{eq:jx-torus}, the density on the group in (b) and (e) at angle $\theta$ is simply the sum of the algebra density of the pre-images of $\theta$.
    The circular representation in (c) and (f) illustrate the density $q$ on the group by drawing a loop with radius $1+q(\theta)$, for angle $\theta$.
    } 
    \label{fig:wrap}
\end{figure*}

\section{PREREQUISITES} \label{app:prereq}
\begin{defn}[Absolutely continuous measures, see \cite{Klenke2014}] \label{app:def-ac}
Let $(X, \mathcal{A})$ be a measurable space, and 
$\nu,\measurealg : \mathcal{A} \to [0, \infty]$ two measures on $(X,\mathcal{A})$. Then $\nu$ is said to be \emph{absolutely continuous} with respect to $\measurealg$, written as $\nu \ll \measurealg$, iff for all $A \in \mathcal{A}$ we have that
\begin{equation}
    \measurealg(A) = 0 \implies \nu(A) = 0.
\end{equation}
\end{defn}
 
\begin{defn}[Density between two measures] \label{app:def-density}
Let $(X, \mathcal{A})$ be a measurable space, and 
$\nu,\measurealg : \mathcal{A} \to [0, \infty]$ two measures on $(X,\mathcal{A})$.
One says that $\nu$ has a \emph{density} w.r.t.\ $\measurealg$ iff there is a measurable function $f:\, X \to \mathbb{R}_{\ge 0}$ such that for all $A \in \mathcal{A}$ we have:
$$ \nu(A) = \int_A f\, d\measurealg.$$
It is knows (see \cite{Klenke2014}) that a density (if existent) is unique up to a $\measurealg$-zero measure and it is often denoted as: $f(x)=\frac{d\nu}{d\measurealg}(x).$
\end{defn}

\begin{theorem}[Radon-Nikod\'ym, see \cite{Klenke2014} Cor.\ 7.34] \label{app:thm-radon-nikodym}
Let $(X, \mathcal{A})$ be a measurable space, and 
$\nu,\measurealg : \mathcal{A} \to [0, \infty]$ two $\sigma$-finite measures on $(X,\mathcal{A})$.
Then one has the equivalence:
$$ \nu \text{ has a density w.r.t.\ } \measurealg 
\,\iff\,  \nu \ll \measurealg.$$
\end{theorem}

\begin{defn}[Pushforward measure] \label{app:def-pushforward}
Let $(X, \mathcal{A}, \measurealg )$ be a measure space, $(Y, \mathcal{B})$ a measurable space, and let $f: X \to Y$ be a measurable map. Then the \emph{pushforward measure} of $\measurealg$ along $f$, in symbols  $f_*\measurealg$, is defined as follows
\begin{equation}
    (f_*\measurealg)(B) := \measurealg(f^{-1}(B)), \quad \text{for } B \in \mathcal{B}.
\end{equation}
\end{defn}

\begin{defn}[The standard measure on (pseudo-)Riemannian manifolds, 
see \citep{nlab:volume_form}]
\label{rem:metric-measure}
Let $(M,g)$ be a (pseudo-)Riemannian manifold with metric tensor $g$. 
The standard measure $\measurealg_g$ on $M$ w.r.t.\ $g$ is in local (oriented) coordinates per definition given by the density $\sqrt{|\det(g)|}$ w.r.t.\ the Lebesgue measure, where $|\det(g)|(x)$ is the absolute value of the determinant of the matrix of $g$ in the local coordinates at point $x$.
Note that the standard measure w.r.t.\ $g$ always exists.
\end{defn}

We are mainly interested in probability distributions on (pseudo-)Riemannian manifolds $(M,g)$ that have a density w.r.t.\ the standard measure $\measurealg_g$ (i.e.\ that are absolute continuous w.r.t. $\measurealg_g$).

\section{CHANGE OF VARIABLES}\label{app:change-of-varables}
Consider a $n$ dimensional Lie group $G$ and its Lie algebra $\alg$. Then a scalar product $\langle\  , \rangle$ on  $\alg$ induces left invariant Riemannian metric on $G$ in the following way: 
\begin{equation}\label{eq:left-metric}
    \langle \x,\y\rangle_a = \langle \dd(L_{a^{-1}})_a \x, \dd(L_{a^{-1}})_a \y \rangle
    \quad \forall a\in G, \quad \x,\y \in T_aG
\end{equation}

Where $\dd(L_{a^{-1}})_a : T_aG \to T_eG \cong \alg $ is the differential of the Left action by $a^{-1}$.
Since we have now given to the Lie group a Riemannian manifold structure, we can endow $G$ with a regular Borel measure $\nu$. Notice that from the construction of the metric $\nu$ is a left-invariant measure, this also called left Haar measure. The left Haar measure is unique up to a scaling constant, determined by the choice of scalar product.
Also the scalar product in the Lie algebra induces a measure $\lebesgue$ in $\alg$\footnote{It is sufficient to consider $\alg$ with the Riemannian metric given by "copying" the scalar product at each point. This could be formalized considering $\alg$ itself a Lie group with respect to vector addition and repeating the same argument used for G}
that is invariant with respect to vector addition and unique up to a constant. 
The following Proposition gives a general formula for the change of variables in Riemannian manifolds:
\begin{prop}\label{prop:riem-change}
(Proposition 1.3 \cite{helgason2})
Let $M$ and $N$ be Riemannian  manifolds and $\Phi$ a diffeomorphism of M onto N. For $p\in M$ let $|\det(d\Phi_p)|$ denote the absolute value of the determinant of the linear isomorphism $d\Phi_p: T_pM \to T_{\Phi(p)}N$ when expressed in terms of any orthonormal bases. Then given a function $F$:
\begin{equation}
    \int_N F(q)\ d q = \int_M F(\Phi(p)) |\det(d\Phi_p)| dp
\end{equation}
if $dp$ and $dq$ denote  the  Riemannian measures on M and N, respectively
\end{prop}

In order to change variables we therefore need an orthonormal basis for the tangent space $T_aG$ at each one of the group elements $a\in G$. 

Similarly as we built the Riemannian metric, this is given by the differential of the Left group action. 

In fact given $\mathbb{B} = (\e_i)_{i\in[n]}$ a basis of the Lie algebra, then a basis $\mathbb{B}_a$ for $T_a G$ is given by $\left(\dd(L_a)_e(e_i)\right)_{i\in[n]}$. If $(\e_i)_{i\in[n]}$ is orthonormal then $\left((dL_a)_e(e_i)\right)_{i\in[n]}$ is an orthonormal basis for $T_a G$ considering $G$ endowed with the Riemannian metric defined in Equation \ref{eq:left-metric}:

\begin{equation}
    \langle \dd(L_{a})_e \e_i, \dd(L_{a})_e \e_j \rangle_a 
    = \langle \e_i,\e_j \rangle = \delta_{ij}
    \quad \forall a\in G, \quad i,j \in [n]
\end{equation}

Then with respect of this basis the matrix representation $\text{U}$ of the differential of the exponential $\dd\exp_\x$ has entries:
$$U_{ij} = \langle (\dd (L_{\exp(\x)})_e)(\e_i), \dd\exp_\x (\e_j)\rangle_{\exp(\x)} = 
\langle \e_i,  d(L_{\exp(\x)^{-1}})_{\exp(\x)}\circ d\exp_\x (\e_j)\rangle $$
Where the equality follows from \eqref{eq:left-metric}
\footnote{Notice that here in the following derivations we identify the tangent space at a point $\x$ of the Lie algebra with the Lie algebra itself.}
. From this equality it is clear that $U$ is equal to the matrix representation of the endomorphism $\dd(L_{\exp(\x)^{-1}})_{\exp(\x)}\circ \dd\exp_\x : \alg \to \alg$ with repect to the basis $\mathbb{B}$. Since the determinant an endomorphism is a quantity defined independently of the choice of the basis. The volume change term is independent on the choice of scalar product and metric and it is given by the determinant of the endomorphism $\dd(L_{\exp(\x)^{-1}})_{\exp(\x)}\circ \dd\exp_\x$ that can be computed with respect of any basis of $\alg$. 
\footnote{Notice that even if the formal construction uses an explicit choice of scalar product and basis the induced measures $\nu$ and $\lebesgue$ are independent of this choice up to a scalar multiplicative constant. Moreover since the choice of the constant for $\lebesgue$ automatically the constant for $\nu$ the change of volume term is completely independent from the choice of scalar product and basis, as showed above. Regardless of these considerations the density of the pushforward measure will in general dependent of the choice of basis and scalar product, an in depth discussion of this behaviour is given in Appendix \ref{app:choice-scalar-algebra}}

Then Theorem 1.7 of \cite{helgason1} gives a general expression of this endomorphism for every Lie group:
\begin{theorem}(Theorem 1.7 of \citet{helgason1})
Let $G$ be a Lie group with Lie algebra $\alg$. The exponential mapping of the manifold $\alg$ into $G$ has the differential:
\begin{equation}\label{eq:equality-diff}
\dd\exp_\x = \dd(L_{\exp(\x)})_e \circ \dfrac{1-\exp(-\ad_\x)}{\ad_\x},
\end{equation}
\end{theorem}
where $\dfrac{1-\exp(-\ad_\x)}{\ad_\x}$ is a formal expression to indicate the infinite power series $\sum_{k=0}^{+\infty} \frac{(-1)^k}{(k+1)!}(\ad_\x)^k$.

Now simply by composing on the left each side of \eqref{eq:equality-diff} with $d(L_{\exp(\x)^{-1}})_{\exp(\x)}$ we have that:
\begin{equation}
   \dd(L_{\exp(\x)^{-1}})_{\exp(\x)} \circ \dd\exp_\x = \dfrac{1-\exp(-\ad_\x)}{\ad_\x}:=
   \sum_{k=0}^{+\infty} \frac{(-1)^k}{(k+1)!}(\ad_\x)^k
\end{equation}

Combining this expression with Proposition \ref{prop:riem-change} we have the general expression
for the change of variables in Lie groups:

\begin{lemma}\label{lemma:lie-varchange}
Let $\lebesgue$ and $\nu$ defined as above. Let $U\subseteq \alg$ an open set in which $\exp_{|U}:U\to \exp(U)\subseteq G $ is a diffeomorphism. Let $f$ measurable function in $\alg$ and $h$ a measurable function in $G$. Then we have:
\begin{equation}\label{eq:lie-varchange}
    \int_U f\ d \lebesgue= \int_{\exp(U)} f(\exp^{-1}(a)) |J(\exp^{-1}(a))|^{-1} d \nu
\end{equation}
\begin{equation}\label{eq:lie-varchange-rev}
    \int_U h(\exp(x)) |J(\x)| d \lebesgue= \int_{\exp(U)} h\ d \nu,
\end{equation}
where:
\begin{equation}
    J(\x) = \det\left(\dfrac{1-\exp(-\ad_\x)}{\ad_\x}\right)
\end{equation}
\end{lemma}

 When we can find all eigenvalues of $\ad_\x$ the following theorem gives a closed form for $J(\x)$ . 

\begin{theorem} \label{app:computing-jacobian}
Let $G$ be a Lie Group and $\alg$ its Lie algebra, then the expression
\begin{equation} \label{eq:ap-computing-jacobian}
    J^{-1}(\x):= \det\left(\dfrac{1-\exp(-\ad_x)}{\ad_x}\right) =
    \prod_{\substack{\lambda \in \text Sp(ad_x) \\\lambda \neq 0 }} \frac{1 - e^{-\lambda}}{\lambda}, 
\end{equation} 

where $Sp(\cdot)$ is the spectrum of the operator, i.e.\ the set of its (complex) eigenvalues, i.e.\ the multiset of roots of the characteristic polynomial of the operator (in complex field), in which each element is repeated as many times as its algebraic multiplicity.
\begin{proof}
Let $P$ a matrix representation on a given basis of the endomorphism $ad_x$. Then we have:
\begin{align}
    J^{-1}(x) = \det \left( \sum_{k=0}^{+\infty} \frac{(-1)^k}{(k+1)!}P^k\right) = {\textstyle\det_{\mathbb{C}}} \left( \sum_{k=0}^{+\infty} \frac{(-1)^k}{(k+1)!}P^k\right),
\end{align}
where $\det_{\mathbb{C}}(\cdot)$ is the determinant in complex field. Formally this is the determinant applied to the complexification of the endomorphism. Now let $Q\in\GL{n, \mathbb{C}}$ such that $P= Q^{-1}(D+N)Q$ where $(D+N)$ is the Jordan normal form of $P$ where $D$ is the diagonal matrix that has as entries elements of the spectrum of $P$ and $N$ is a nilpotent matrix.
Then we have:
\begin{align}
{\textstyle\det_{\mathbb{C}}} \left( \sum_{k=0}^{+\infty} \frac{(-1)^k}{(k+1)!}P^k\right) = 
{\textstyle\det_{\mathbb{C}}} \left( \sum_{k=0}^{+\infty} \frac{(-1)^k}{(k+1)!}(D+N)^k\right) = 
{\textstyle\det_{\mathbb{C}}} \left( \sum_{k=0}^{+\infty} \frac{(-1)^k}{(k+1)!}(D)^k\right),
\end{align}
where the last equality follows from the fact that $(D+N)^k = D^k + N'$ where $N'$ is an another nilpotent matrix, and from the fact that the determinant of a triangular matrix depends only on the diagonal entries. 
Using the definition of $D$ we can then write: 
\begin{align}
 J^{-1}(\x) = {\textstyle\det_{\mathbb{C}}} \left( \sum_{k=0}^{+\infty} \frac{(-1)^k}{(k+1)!}(D)^k\right) = 
\prod_{\lambda\in Sp(\ad_\x)} \left( \sum_{k=0}^{+\infty} \frac{(-1)^k}{(k+1)!}(\lambda)^k\right)
\end{align}
Now if $\lambda = 0$ then $ \sum_{k=0}^{+\infty} \dfrac{(-1)^k}{(k+1)!}(\lambda)^k = 1$.
Else, if $\lambda \neq 0$ then $\sum_{k=0}^{+\infty} \dfrac{(-1)^k}{(k+1)!}(\lambda)^k = \dfrac{1 - e^{-\lambda}}{\lambda}$
\end{proof}
\end{theorem}

\subsection{Matrix Lie Groups} \label{app:subsec-matrix-lie-groups}
In the case of a matrix Lie group $G<\GL{n,\mathbb{R}}\subseteq \M{n,\R}$ we can exploit the fact our group is embedded in $\M{n,\R}\simeq \R^{n\times n}$ to give an alternative way to compute a matrix representation of $\dd(L_{\exp(\x)^{-1}})_{\exp(\x)}\circ \dd \exp_\x$.
This corresponds to what in the literature is known as the Left Jacobian $J_l$

Here we show how we can derive the expression of $J_l$ from the formal framework described in the previous Sections, using the additional information given by the fact that we are in a matrix Lie group. 
This is done using the fact that at each point $a\in G$ the tangent space $T_a G < T_a(\GL{n, \mathbb{R}}) \tilde= \M{n, \mathbb{R}}$ can be identified with a subspace of the real $n\times n$ matrices. 

In fact let $p\in \GL{n, \mathbb{R}}$, considering $\GL{n, \mathbb{R}}$ as an open subset of $\M{n,\mathbb{R}}$ then the canonical basis $(E_{ij})_{ij}$ of $\M{n, \mathbb{R}}$ induces the isomorphism $\psi_p: \M{n, \mathbb{R}} \to T_p(\GL{n, \mathbb{R}}),\ E_{ij} \mapsto \partial_{E_{ij}}|_p$. 
With this identification the diffe
rential of the $\exp$ is a map from $\M{n, \mathbb{R}}$ to $\M{n, \mathbb{R}}$ and can be directly computed taking derivatives. 
The same holds for the differential of the left group action. Moreover the following Lemma shows that it corresponds to a matrix left multiplication. 
With this isomorphism we can see that the differential of left multiplication corresponds exactly to left matrix multiplication: 
\begin{lemma}\label{lemma:left-action-matrix}
Let $P,Q\in \GL{n, \mathbb{R}}$ and let $L_P$ the left action of $P$ then $d( L_P)_Q$ identifying both the tangent spaces with $\M{n, \mathbb{R}}$ using the isomorphisms $\psi_P, \psi_{PQ}$ is the following function:
\begin{align}
   \dd( L_P)_Q :  &\M{n,\mathbb{R}}\to \M{n,\mathbb{R}}\\
                & X\mapsto PX 
\end{align}
\end{lemma}
\begin{proof}
Let $X\in\M{n, \mathbb{R}}$ then $\forall f \in C^{\infty}(\GL{n, \mathbb{R}})$
\begin{align}
\left[d( L_P)_Q\left({\partial_X}_{|_Q}\right)  \right] (f) = 
{\partial_X}_{|_Q} \left(f \circ L_P \right) = 
{\frac{d}{dt}}_{|_{t = 0}}\left(f \circ L_P(Q + tX)\right) = \\
= {\frac{d}{dt}}_{|_{t = 0}}f\left(L_P(Q + tX)\right) = 
{\frac{d}{dt}}_{|_{t = 0}}f\left(PQ + tPX)\right) = 
\left({\partial_{PX}}_{|_{PQ}}\right)  (f) 
\end{align}
\end{proof}

These considerations lead to the following result:

\begin{theorem}
Now let $G<\GL{n, \mathbb{R}}$ be a matrix Lie group, $\mathbb{B} := (\vv_i)_i$ a basis of the Lie algebra. Then the Lie algebra endomorphism $d(L_{\exp(X)^{-1}})_{\exp(X)}\circ d\exp_X$ has matrix representation with respect to $\mathbb{B}$:
\begin{equation}\label{left-jacobian}
    J_l(X) = \begin{bmatrix}
    \left(\exp(X)^{-1}\dfrac{\partial \exp}{\partial \vv_1}(X)\right)^{\lor}\ \vline & 
    \cdots \quad \vline&
    \left(\exp(X)^{-1}\dfrac{\partial \exp}{\partial \vv_n}(X)\right)^{\lor}
    \end{bmatrix}\in \M{n, \mathbb{R}}
\end{equation}
Which is called the left-Jacobian. Where $(\cdot)^{\lor}:= \varphi_{\mathbb{B}} : \alg \to \mathbb{R}^n$ is the ismomorphism given by the basis $\mathbb{B}$. 
\begin{proof}
Considering $G$ as embedded in $\GL{n, \mathbb{R}}$ Then the tangent space at each point can be identified with a vector subspace of $\M{n, \mathbb{R}}$. 

Then given this identification, taking $X\in \alg\subseteq\M{n, \mathbb{R}}$ the quantities $d\exp_X(\vv_i) = \frac{\partial \exp}{\partial \vv_i}(X)
\in \M{n, \mathbb{R}}$ are real valued matrices and can be simply obtained deriving the expression of the exponential in each entry. 
Moreover we have $ \left[d(L_{\exp(X)^{-1}})_{\exp(X)}\circ d\exp_X\right] (\vv_i) = \exp(X)^{-1}\frac{\partial \exp}{\partial \vv_i}(X) \in \alg\subseteq\M{n, \mathbb{R}}$  
where the equality is given by considering the left group action as the restriction of $L_{exp(X)}:\GL{n, \mathbb{R}} \to \GL{n, \mathbb{R}} $ to $\alg$ and applying the Lemma \ref{lemma:left-action-matrix}.This gives an explicit description on how the endomorphism acts on each vector of the basis. From this we can build its matrix representation $J_l(X)$. This gives us the thesis. 
\end{proof}
\end{theorem}

\section{PUSHFORWARD DENSITY} \label{app:main-theorem}
\subsection{Preliminary Lemmata}
\begin{lemma}[See \citep{duistkolk2000lie} Cor.\ 1.5.4]
For a Lie Group $G$ with algebra $\alg$ and exponential map $\exp : \alg \to G$, the set of singular points $\Sigma$ is the set:

\begin{align*}
\Sigma &=\{\x \in \alg | \det(T_\x \exp )=0\} \\
  &= \bigcup_{k \in \mathbb{Z} \setminus \{0\}}k \Sigma_1, 
\end{align*}
with
\begin{align*}
\Sigma_1 &:= \{\x \in \alg | \det \left( (\ad_\x)_\mathbb{C} - 2 \pi i I \right)=0 \},
\end{align*}
where $(\ad X)_\mathbb{C}$ denotes the adjoint representation of the real Lie algebra $\alg$ as a linear operator on the complex vector space.
\end{lemma}

\begin{lemma} 
Let $f \in \mathbb{C}[X_1,\dots,X_n]$ be a complex polynomial viewed as a function on the real vector space $\mathbb{R}^n$:
$$ f:\, \mathbb{R}^n \to \mathbb{C},\quad x \mapsto f(x)$$
Then either $f$ is identically zero or the set of roots
$\{ x \in \mathbb{R}^n \, |\, f(x) =0\}$
has Lebesgue measure zero in $\mathbb{R}^n$.
\begin{proof}
The problem is reduced to the real polynomial $g \in \mathbb{R}[X_1,\dots,X_n]$ defined by
$$ g:=\mathrm{Re}(f)^2+\mathrm{Im}(f)^2$$
It has the same set of (real) roots as $f$ and
$g$ is identically zero if and only if $f$ is.
The statement then follows from the theorem of Okamoto. A simple proof can be found in \citep{zeroset2005}.
\end{proof}
\end{lemma}

\begin{lemma} \label{lemma:singular}
For a Lie Group $G$ with algebra $\alg$ and exponential map $\exp : \alg \to G$, the set of singular points $\Sigma$ is closed and has Lebesgue measure 0.
\end{lemma}
\begin{proof}
$\Sigma$ is closed because it is the preimage of the closed set $\{0\} \subset \mathbb{R}$ of the continuous function  $\det(T_X \exp )$.

Let $f(X)=\det \left( (\ad X)_\mathbb{C} - 2 \pi i I \right)$. $f$ is a polynomial in $X$, because $\ad$ is linear and $\det$ polynomial. $f$ can not be identically zero, as $\{0\} \not \in \Sigma$, because $\exp$ is a diffeomorphism in a neighbourhood of $0 \in \alg$ (see \citep{duistkolk2000lie} 1.3.4). Thus, the set of roots of $f$, namely $\Sigma_1$, has Lebesgue measure zero.
It follows that also $\Sigma$ has Lebesgue measure zero.
\end{proof}

\begin{defn}(Sets of Lebesgue measure 0 on a Manifold)
If M is a smooth n-manifold  we say that a subset $A \subseteq M$ has measure zero in $M$
if for every smooth chart $(U, \varphi)$ the subset $\varphi(A\cap U)\subseteq \R^n$ has n-dimensional measure zero. 
\end{defn}

\begin{lemma} \label{lemma:zero-boundary}
Let $M$ a smooth manifold. Then $\forall p \in M$ and $U$ open neighbourhood of $p$ there exists $U' \subseteq U$ open neighbourhood of $p$ such that $\partial U'$  has Lebesgue measure $0$.
\begin{proof}
Take a smooth chart $(V,\varphi)$ such that $p\in V$. Let $V':= V\cap U$ open set. 
Then $\varphi(V')$ is an open set in $\R^n$ such that $\varphi(p)\in \varphi(V')$. Take then an open ball $B(\varphi(p),r)$ with $r>0$ such that $\subseteq \varphi(V')$. If define $U':=\varphi^{-1}(B(\varphi(p),r))$ we have that $U'$ is an open neighborhood of $p$ and that $\varphi(\partial U') = \partial B(\varphi(p),r)$ has measure $0$ in $\R^n$. Then Lemma 6.6 of \cite{LeeSmooth} implies that $\partial U'$ has measure 0 in $M$
\end{proof}
\end{lemma}

\begin{lemma} \label{lemma:local-diffeomorphism}
Let $N$ and $M$ smooth manifolds of the same dimension and $F: M \to N$ a smooth map. Let $D := \{p\in M : F \text{ is a local diffeomorphism at } p \} \subseteq M$. Then $D$ can be partitioned in $D = B\bigcup \left(\cup_{k = 1}^{+\infty}A_k \right) $ such that $B$ has Lebesgue measure 0 and for every $k$ $A_k$ is an open set such that $F|_{A_k}: A_k \to F(A_k)$ is a diffeomorphism.
\end{lemma}
\begin{proof}
We first show that $D $ is open: $\forall p\in D$ since $F$ is a local diffeomorphism at $p$ there exists a neighbourhood $U_p\ni p$ such that $F|_{U_p}$ is a diffeomorphism. Then $U_p \subseteq D$. This shows that $\mathring{D} = D$ thus $D$ is open. Therefore $D$ inherits a manifold structure from $M$ as a sub-manifold, meaning that $D$ is second countable, implying $D$ is Lindel\"of (see \citep{lee2010introduction}, Thm.\ 2.50). This means that every open cover has a countable subcover. 
\par
For every $p\in D$ consider $U_p\in D$, neighbourhood of $p$ such that $F|_{U_p}$ is a diffeomorphism. Then by Lemma \ref{lemma:zero-boundary} there exists $U'_p \subseteq U_p$ open neighbourhood of $p$ such that $\partial U'_p$  has Lebesgue measure $0$. 
Consider then the cover $\{U'_p : p\in D\}$, by Lindel\"of 
property it has a countable subcover $\{A'_n\}_{n=1}^{+\infty}$. We then iteratively build the sets $A_1:= A'_1$ , $A_n:= A'_n\setminus \overline{\left({\cup_{k=1}^{n-1}A'_k}\right)}$. Then by construction the sets $A_n$ are open and $F|_{A_n}$ is a diffeomorphism. Moreover defining $B:= D\setminus \left({\cup_{k=1}^{+\infty}A_k}\right)$ we are are left to show that $B$ has Lebesgue measure $0$. This simply follows from the fact that by construction $B\subseteq \cup_{k=1}^{+\infty}\partial A'_k$ and that the sets $A'_k$ were defined to have boundary of Lebesgue measure $0$. 
To see that $B\subseteq \cup_{k=1}^{+\infty}\partial A'_k$ consider $b\in B$ and define the set $N_b =\{n\in {\mathbb{N}^+} : b\in A'_n\}$, the set is not empty since the sets $A'_k$ form a cover of $D$. Let then $m\in N_b$ be the smallest element in $N_b$. Since $b\in B$ then $b\not\in A_1 = A'_1$ therefore $m>1$. Moreover $b\not\in A_m$ and since $b\in A'_m$ we have that $ b\in  \overline{\left({\cup_{k=1}^{m-1}A'_k}\right)} = \left({\cup_{k=1}^{m-1}A'_k}\right) \cup \partial \left({\cup_{k=1}^{m-1}A'_k}\right)$. By definition of $m$, $b \not \in \left({\cup_{k=1}^{m-1}A'_k}\right)$, then $b \in \partial \left({\cup_{k=1}^{m-1}A'_k}\right)\subseteq  {\cup_{k=1}^{m-1}\partial A'_k} \subset {\cup_{k=1}^{+\infty}\partial A'_k}$
\end{proof}

\subsection{Main Theorem}

Now suppose we have samples from a measure $\measurealg \ll \lebesgue$ with density $r$. We can then "push" the samples to elements in $G$ through the $\exp$ map. The resulting samples will be distributed according to the  pushforward measure $\exp_*(\measurealg)$ on $G$. 
The following theorem ensures that $\exp_*(\measurealg)$ is a.c. with respect to the left Haar 
measure $\nu$ and gives an expression for the density
\begin{theorem} \label{thm:main}
Let $G$, $\alg$, $\measurealg, \lebesgue, \nu$ defined as above. Then $\exp_*(\measurealg)\ll \nu$ with density:
\begin{equation}
    p(a) = \sum_{\{\x\in \alg : \exp(\x) = a\}} r(\x) |J(\x)|^{-1},
\end{equation}
where $J(\x):=  \det\left(\dfrac{1-\exp(-\ad_\x)}{\ad_\x}\right) = \det \left(\sum_{k=0}^\infty \dfrac{(-1)^k}{(k+1)!} (\ad_\x)^k \right)$
\end{theorem}

\begin{proof}
Using Lemma \ref{lemma:singular}, we partition $\alg$ in the open set $A$ such that $A$ is the set of points in which $\exp$ is a local diffeomorphism and $\Sigma:=\alg\setminus A$.

Using Lemma \ref{lemma:local-diffeomorphism}, we further partition $A$ in countably many open sets $\{A_k\}_{k \in I}$, for some index set $I$, and a set $B$, such that $\exp|_{A_k}$ is a diffeomorphism for all $k$ and $B$ is of Lebesgue measure 0. Define for $S \in \mathcal{B}[\alg]$:
$$\measurealg_{S} : \mathcal{B}[\alg] \to [0, 1] : E \mapsto \measurealg(E \cap S)$$
Then we have, since $\measurealg \ll \lebesgue$ and $\lebesgue(\Sigma)=\lebesgue(B)=0$:
$$\measurealg = \measurealg_\Sigma + \measurealg_B + \sum_{k\in I}\measurealg_{A_k} = \sum_{k\in I}\measurealg_{A_k}$$
Consider the pushforward measure $\exp_*(\measurealg)$, we have for all $D\in \mathcal{B}[G]$:
\begin{align*}
(\exp_*(\measurealg))(D) &= \measurealg(\exp^{-1}(D))
\\ &= \sum_{k\in I}\measurealg_{A_k}(\exp^{-1}(D)) \\
&= \sum_{k\in I}(\exp_{*}(\measurealg_{A_k}))(D) \\
&= \sum_{k\in I}((\exp_{|A_k})_*(\measurealg))(D) \\
\implies \exp_*(\measurealg) &= \sum_{k\in I}(\exp_{|A_k})_*(\measurealg),
\end{align*}
where we define:
$$((\exp_{|A_k})_*(\measurealg))(D)=\measurealg(\exp^{-1}(D) \cap A_k)$$
Notice that $\exp_{|A_k}:A_k \to \exp(A_k)$ is now a diffeomorphism, so the change of variable formula in (\ref{eq:lie-varchange}) can be applied:
\begin{align}
((\exp_{|A_k})_*(\measurealg))(D)
&= \int_{\exp_{|A_k}^{-1}(D)\cap A_k} r \ d\lebesgue \nonumber \\
&= \int_{D\cap \exp(A_k)}(r\circ\exp_{|A_k}^{-1})\cdot (|J|^{-1}\circ{\exp_{|A_k}^{-1}}) \ d\nu
\end{align}
Then $(\exp_{|A_k})_*(\measurealg)\ll \nu$ and since $\exp_*(\measurealg)= \sum_{k\in I}((\exp_{|A_k})_*(\measurealg))$ then $\exp_*(\measurealg)\ll \nu$. 

In order to find the expression for the density we observe that $(\exp_{|A_k})_*(\measurealg)$ has density $r(\exp_{|A_k}^{-1}(a))J^{-1}(\exp_{|A_k}^{-1}(a))\charf_{\exp(A_k)}(a)$ where $a\in G$ and $\charf$ is the indicator function. 
Then we have that the density of $\exp_*(\measurealg)$ with respect to $\nu$ is 
\begin{align}
\sum_{k\in I}r(\exp_{|A_k}^{-1}(a))|J^{-1}(\exp_{|A_k}^{-1}(a))|\charf_{\exp(A_k)}(a) =   
\sum_{\{\x\in \alg : \exp(\x) = a\}} r(\x) |J(\x)|^{-1},
\end{align}
where the last equality is true almost everywhere in $G$. This can be seen if we define the set $N\subseteq G$ as all the points $p\in G$ in which $\{\exp_{|A_k}^{-1}(p) : k \in I\} \neq \{x\in \alg : \exp(x) = p\}$. Then $N$ has Lebesgue measure 0. 
In fact $N \subseteq \exp\left(B\cup \Sigma \right)$ and since $B\cup \Sigma$ has measure zero in $\alg$ and $\exp$ is smooth then by Theorem 6.9 in \cite{LeeSmooth} $\exp\left(B\cup \Sigma \right)$ has measure $0$.

\qedhere
\end{proof}

\section{COMPUTATIONAL COMPLEXITY} \label{app:computational-complexity}

\subsection{Complexity of the Reparameterization Trick}
In this appendix we will analyze the complexity of performing the reparameterization trick when working with a Lie group $G$ of dimension $n$.
For simplicity we will assume in the following considerations that $G$ is a matrix Lie group. The complexity is given by the cost of computing the $\exp$ map and its differential. The $\exp$ map for a matrix lie group is given by the matrix exponential
\begin{equation}
 \exp\left(X\right) = \sum_{k=0}^\infty \frac{X^k}{k!} \quad X \in \M{n, \mathbb{R}},
\end{equation}
which involves an infinite summation. In general the worst case complexity for computing a good approximation of the above expression is $O(n^3)$ \footnote{The interested reader is referred to \citep{moler2003nineteen} for a survey on the possible ways to compute matrix exponentials with a detailed explanation for the complexity of each method.}. 

For the differential of the $\exp$ map, the computation via the left-Jacobian \eqref{left-jacobian} is generally also cubic in $n$, as it involves a matrix inversion. The alternative is to use equation \eqref{eq:ap-computing-jacobian} in Theorem \ref{app:computing-jacobian}, in which case the complexity is cubic in $n$ as well. In fact because the Lie algebra $\mathfrak{g}$ is a vector space of dimension $n$, then since $\ad_\x\in \text{End}(\mathfrak{g})$ fixed a basis for $\mathfrak{g}$, $\ad_\x$ has a matrix representation as an element of $\GL{n, \mathbb{R}}$. One can either compute the $\exp$ of this matrix, or find its eigenvalues, both operations are cubic in $n$. 

Despite the above considerations, for specific Lie Group there might exist specific analytic calculations to derive closed form expressions for the exponential map and for the eigenvalues of the adjoint, using group specific properties. This can in practice lead to a significant reduction in computational complexity, as it is shown in the specific examples of Section \ref{sec:examples}. 

\subsection{Approximation of Infinite Summations}
In Appendix \ref{app:main-theorem} we have proven that the the pushforward measure of a probability measure in the Lie algebra is a well defined measure on the Lie Group, with a density with respect to the Haar measure on the group. However the expression of the density at a point depends on a potentially infinite summation.
In general since $ 1 = [\exp_*(\measurealg)](G) = \int_{G}p d\nu$, the density is finite almost everywhere in $G$. This means that the infinite series can be truncated at the $N$-th term, still retaining an arbitrarily good approximation (that depends on $N$). In practice we have observed that when using an exponentially decaying distribution on the lie algebra, only an handful of terms are sufficient to get a good approximation. 
However it is difficult to derive general bounds and to determine a priori a good value for $N$, as this will greatly depend on the choice of base distribution, on the specific Lie Group and on the way we decide to enumerate the points in $\exp^{-1}(a)$. 

A possible alternative to avoid infinite summations is to use a compactly supported distribution, this reduces the infinite series to a finite summation, since the terms become definitely $0$. 
Notice that since compactly supported functions are dense in $L^1(\R^n)$ and that $r\in L^1(\R^n)$, there is always a compactly  supported function that approximates $r$ arbitrarily well. 

Connected to this approach, it is possible to choose a density supported in the injectivity radius of the exponential map. The summation then reduces to one term. Moreover if the base density $r$ is smooth then the density of the pushforward will also be smooth. 

\section{CHOICE OF BASIS AND SCALAR PRODUCT IN THE LIE ALGEBRA} \label{app:choice-scalar-algebra}
In the previous Sections the starting point for obtaining a reparameterizable density  on the Lie Group $G$ was using a reparameterizable density on the corresponding Lie algebra $\alg$. 

Since computations usually can only be done on real values we need a concrete representation of the abstractly defined Lie algebra $\alg$ as some real vector space $\mathbb{R}^n$.

This amounts to say that we need to choose a concrete basis $b=(b_1,\dots,b_n)$ with the $b_i \in \alg$ and identify linear combinations $v=\sum_{i=1}^n x_i \cdot b_i \in \alg$
with the corresponding vector $x=(x_1,\dots,x_n)^T \in \mathbb{R}^n$. Every such a choice of basis gives us a linear isomorphism:
$$ \psi_b:\;\mathbb{R}^n \cong \alg, \qquad x=(x_1,\dots,x_n)^T \mapsto \sum_{i=1}^n x_i \cdot b_i$$
Furthermore, the standard scalar product on $\mathbb{R}^n$ induces a scalar product $<.,.>$ on $\alg$ via the above isomorphism.

We can then proceed in two ways: 
\begin{enumerate}
    \item In case we can directly and intrinsically define a (probability) measure $\measurealg$ on $\alg$
    then we can take any basis $b$ and push $\measurealg$ via $\varphi_b:=\psi_b^{-1}$ to $\mathbb{R}^n$ (to get $\varphi_{b,*}\measurealg$).
    We then can use the real valued representation there to reparameterize the corresponding density. All results can then be pulled back to $\alg$ with $\psi_b$.

    \item The second way is to start directly  from a reparameterizable measure $\measurealg'$ 
    on $\mathbb{R}^n$ and then define: $\measurealg_b:=\psi_{b,*}\measurealg'.$ 
\end{enumerate}

Even though both view points seem to be equivalent, only the first one is independent of the representation as the ``true'' measure $\measurealg$ on $\alg$ was already given. The second method will highly depend on the choice of the basis $b$ and the measure $\measurealg'$. Therefore, if possible, the first approach is preferred. However in practice specifying measures or densities directly in $\mathbb{R}^n$ is easier as the abstract definition of $\alg$ is not directly accessible. We will discuss this further in the following.

As mentioned before, the standard scalar product on $\mathbb{R}^n$ induces a scalar product $<.,.>$ on $\alg$ via aboves isomorphism and thus a left-invariant Riemannian metric on $G$. 

So the whole Riemannian geometric structure of the Lie group $G$ is sensitive to the choice of the basis on $\alg$. Also, if we would now sample from a skewed distribution $p(x)$ on $\mathbb{R}^n$ and push the samples to $G$ via the maps:
\begin{align}
    \mathbb{R}^n \stackrel{\psi_b}{\cong} \alg \stackrel{\exp}{\longrightarrow} G, \nonumber
\end{align}
then these would in general not be the same as when using another basis for the isomorphism. To summarize, we need to choose the basis carefully and keep the dependence on it in mind.

Now let us assume that we already have a specified scalar product $<.,.>$ on $\alg$.  Then a natural choice would be to take a orthonormal basis
$b=(b_1,\dots,b_n)$ w.r.t.\ the given scalar product, i.e.\ we have:
$ <b_i,b_j>= \delta_{i,j}.$
Then still skewed distributions $p(x)$ on $\mathbb{R}^n$ would be mapped to different distributions under a different choice of orthonormal basis. In case $p(x)$ is invariant under orthonormal transformations (i.e.\ $p(g.x)=p(x)$ for all $g \in O(n)$ and $x \in \mathbb{R}^n$) like Normal distributions of form $p(x)=\mathcal{N}(x|0,\sigma^2 \cdot I)$ 
then the pushforward of $p(x)$ onto $G$ would not depend on the choice of orthonormal basis.

But note that the notion of orthonormality strongly depends on the chosen scalar product $<.,.>$ on $\alg$ and the number of choices one can make are i.g.\ infinite. Different scalar products lead to different orthonormal basises.

So it is left to discuss how to choose a scalar product on $\alg$ or a Riemannian metric $g$ on $G$, resp.. To reduce the number of Riemannian metrics we can impose additional desirable properties onto them, like bi-invariance.\\

\begin{theorem}[See \citep{milnor1976,ab-lie2015}]
\begin{enumerate}
\item Any Lie group $G$ that is isomorphic to the direct product of a compact Lie group $K$ and $\mathbb{R}^n$,  $n \ge 0$, admits a bi-invariant (i.e.\ left- and right-invariant) Riemannian metric $g$.
\item If $G$ is connected then also the reverse statement holds.
\item If $G$ admits a bi-invariant Riemannian metric then the Lie exponential map and the Riemannian exponential map at the identity agree.
\item If $G$ is a compact and simple Lie group then the bi-invariant Riemannian metric is unique up to a positive constant $c >0$.
\end{enumerate}
\end{theorem}

 It turns out that for certain types of Lie groups there is even a natural choice of scalar product, the so called \emph{negative Killing form}.\\

\begin{theorem}[See \citep{milnor1976,ab-lie2015}]
Let $G$ be a Lie group and for $x,y \in \alg$ define the \emph{negative Killing form} as: 
$$ <x,y> := -  \trace\left( \ad_x \circ \ad_y \right)$$
We then have the following results:
\begin{enumerate}
    \item $G$ is semisimple iff and $<.,.>$ is non-degenerate.
    \item If $G$ is semisimple and compact then $<.,.>$ induces a bi-invariant Riemannian metric on $G$.
\end{enumerate}
\end{theorem}

\subsection{Summary}

\begin{enumerate}
    \item Consider the case that we have a simple and compact Lie groups $G$ (e.g.\ $SO(2)$ or $SO(3)$).
    \item Then take the negative Killing form (up to scale $c>0$) as scalar product on $\alg$:
    $$ <x,y> := -  \trace\left( \ad_x \circ \ad_y \right).$$
    \item The left multiplications $L_a$ of $<.,.>$, for $a \in G$, then induces a bi-invariant Riemannian metric $g$ on $G$.
    \item $g$ induces the bi-invariant Haar measure $\measurealg_g$ on $G$, which on arbitrary local charts is given by the density $\sqrt{|\det(g)|}$ w.r.t.\ Lebesgue measure.
    \item In case we can compute $\measurealg_g(G)$, re-scaling the scalar product by multiplying it with the factor $c:= \frac{1}{\sqrt[n]{\measurealg_g(G)^2}}$ with $n=\dim(G)$ makes the then induced bi-invariant Haar measure normalized (i.e.\ $\measurealg_g(G)=1$).
    \item In any case, choose a orthonormal basis $b_1,\dots,b_n$ of $\alg$ w.r.t.\ $<.,.>$ and fix the isomorphism:
    $$ \varphi_b:\;\mathbb{R}^n \cong \alg, \qquad x=(x_1,\dots,x_n)^T \mapsto \sum_{i=1}^n x_i \cdot b_i$$
    \item Then the pushforward (via $\exp$) onto $G$ of probability distributions $p(x)$ on $\mathbb{R}^n$  that are invariant under $O(n)$ w.r.t.\ $<.,.>$  are independent of the chosen basis and independent of the chosen bi-invariant metric up to scale. 
    \item For example for the Normal distribution $p(x)=\mathcal{N}(x|0,\sigma^2\cdot I)$ this basically just reduces to the choice of variance $\sigma^2$ (even when not normalized, since multiplication with $c>0$ only changes the variance).
\end{enumerate}

\begin{rem}
If $G$ is only a semisimple Lie group then the negative Killing form $<.,.>$ on $\alg$ can still be used to induce a bi-invariant pseudo-Riemannian metric on $G$ and thus a bi-invariant Haar measure $\measurealg_g$, which still on local oriented coordinates is given by the density $\sqrt{|\det(g)|}$ w.r.t.\ the Lebesgue measure.
\end{rem}

\end{document}